\newtheorem{prop}{Proposition}
\newtheorem{defn}{Definition}
\newtheorem{theorem}{Theorem}
\newtheorem{lemma}{Lemma}
\newtheorem*{notation*}{Notation}
\DeclareRobustCommand\onedot{\futurelet\@let@token\@onedot}
\def\@onedot{\ifx\@let@token.\else.\null\fi\xspace}
\def\eg{\emph{e.g}\onedot}
\icmltitlerunning{Block Mean Approximation for Efficient Second Order Optimization}
\begin{document}

\twocolumn[
\icmltitle{Block Mean Approximation for Efficient Second Order Optimization}

\begin{icmlauthorlist}
	\icmlauthor{Yao Lu}{anu,d61}
	\icmlauthor{Mehrtash Harandi}{anu,d61}
    \icmlauthor{Richard Hartley}{anu}
    \icmlauthor{Razvan Pascanu}{dp}
\end{icmlauthorlist}

\icmlaffiliation{d61}{Data61, CSIRO}
\icmlaffiliation{anu}{Australian National University}
\icmlaffiliation{dp}{Google DeepMind}

\icmlcorrespondingauthor{\\ Yao Lu}{yaolubrain@gmail.com}

\icmlkeywords{Deep Neural Networks, Natural Gradient, Optimization}

\vskip 0.3in
]

\printAffiliationsAndNotice{}

\begin{abstract}

Advanced optimization algorithms such as Newton method and AdaGrad 
benefit from second order derivative or second order statistics to achieve better descent directions and faster convergence rates.
At their heart, such algorithms need to compute the inverse or inverse square root of a matrix whose size is quadratic of the dimensionality of the search space. For high dimensional search spaces, the matrix inversion or inversion of square root becomes overwhelming which in turn demands for approximate methods. 
In this work, we propose a new matrix approximation method which divides a matrix into blocks and represents each block by one or two numbers. The method allows efficient computation of matrix inverse and inverse square root. We apply our method to AdaGrad in training deep neural networks. Experiments show encouraging results compared to the diagonal approximation.
\end{abstract}

\section{Introduction}

Gradient-based optimization algorithms usually have the following update form,
\begin{align}
\bm{\theta} \leftarrow  \bm{\theta} - \eta \mathbf{G}^{-1} \nabla_{\bm{\theta}}f(\bm{\theta})
\label{gradient_descent}
\end{align}
where $\bm{\theta} \in \mathbb{R}^d$ is the model parameters, $\eta$ is a learning rate, $\mathbf{G}$ is a $d\times d$ non-singular matrix and $f(\cdot)$ is the loss function. If $\mathbf{G}$ is the identity matrix, (\ref{gradient_descent}) is the gradient descent method. 

In practice, gradient descent often converges slowly and its performance depends critically on how $f(\bm{\theta})$ is parameterized. That is, 
if one chooses $\bm{\theta} = g(\bm{\xi})$, 
then the behavior of $\bm{\theta} \leftarrow  \bm{\theta} - \eta \nabla_{\bm{\theta}}f(\bm{\theta})$ can be significantly different from that of $\bm{\xi} \leftarrow  \bm{\xi} - \eta \nabla_{\bm{\xi}}f(g(\bm{\xi}))$. (see an example in~\textsection~\ref{sec:parameter_dependency}).

\newpage

To accelerate gradient descent and deal better with the parameterization issue, one can resort to second order optimization methods, 
in which the second order derivative (Hessian) of $f(\cdot)$ or second order statistics of gradients is incorporated in $\mathbf{G}$. 
In the Newton method,  $\mathbf{G}$ is chosen as the Hessian matrix of $f(\cdot)$ according to
\begin{align}
\mathbf{G}_{ij} = \frac{\partial^2 f(\bm{\theta})}{\partial \theta_i  \partial \theta_j}\;.
\end{align}
The Newton method approximates the loss function locally by a quadratic 
function, in which the Hessian matrix measures the curvature of the loss function.  This in turn yields better descent directions than the ones obtained by solely considering the gradient directions. In fact, under mild conditions, the Newton method converges to  a local minimum at a quadratic rate while gradient descent has a linear convergence 
rate~\cite{nocedal2006numerical}. Besides, the Newton method is invariant to affine re-parameterization (see the derivation in Appendix).

The concept of natural gradient~\cite{amari1998natural} provides another perspective by conditioning gradient step on the KL-divergence variations induced by the model output distribution. That is, any given step in the natural gradient method produces an equal amount of variation in terms of the KL-divergence. 
It is shown that the method of natural gradient is invariant to the parameterization of the model~\cite{pascanu2013revisiting}. 
Approximating the KL-divergence variations by its second order Taylor series, one arrives at a form that looks like the Newton method, albeit with
$\mathbf{G}$ becoming the Fisher information matrix
\begin{align}
\label{eqn:fisher_information}
\mathbf{G} =  \mathbb{E}_{\mathbf{x}\sim p(\mathbf{x}|\bm{\theta})} \left[ \frac{\partial \log p(\mathbf{x}|\bm{\theta})}{\partial \bm{\theta}}\frac{\partial \log p(\mathbf{x}|\bm{\theta})}{\partial \bm{\theta}}^\intercal \right]
\end{align}
where $p(\mathbf{x}|\bm{\theta})$ is the probabilistic model we try to optimize. 
It has been shown that the Fisher information matrix can be viewed as an approximated Hessian matrix~\cite{martens2014new}.  
Various studies suggest that natural gradient can have a better convergence rate than that of the gradient descent method (\eg, blind signal separation~\cite{amari1996new}, reinforcement learning~\cite{peters2008natural} and variational inference \cite{honkela2010approximate}). 

\newpage

For stochastic optimization, the AdaGrad algorithm~\cite{duchi2011adaptive}  incorporates the previously computed gradients to guide its current descent direction. In AdaGrad, $\mathbf{G}$ is the matrix square root of an outer product matrix of gradient vectors
\begin{align}
\mathbf{G} = \left(\sum_{\tau=1}^t \mathbf{g}_{\tau} \mathbf{g}_{\tau}^\intercal\right)^{\frac{1}{2}}
\end{align}
where $\mathbf{g}_{\tau} = \nabla_{\bm{\theta}}f_{\tau}(\bm{\theta}_{\tau})$ is the gradient estimated from a mini-batch of data at step $\tau$ and $t$ is the current step. The relationship between the matrix $\sum_{\tau=1}^t \mathbf{g}_{\tau} \mathbf{g}_{\tau}^\intercal$ 
and the Hessian matrix is discussed in \cite{hazan2007logarithmic}.

Despite their intriguing properties and fast convergence rates, the aforementioned second order optimization methods become overwhelming for high-dimensional $\bm{\theta}$. 
This is because construction and inversion of $\mathbf{G}$ have a time complexity of $O(d^3)$. 
As such, various studies resort to approximation techniques when it comes to high-dimensional problems.

A simple, yet effective approximation is the diagonal approximation, where one only keeps the diagonal elements of $\mathbf{G}$.  A classic method is the Levenberg-Marquardt 
algorithm~\cite{levenberg1944method,marquardt1963algorithm} which uses the diagonal approximation of the Hessian matrix. Its stochastic version for training neural networks is proposed in~\cite{becker1988improving}. The diagonal approximation of AdaGrad and its variants such as AdaDelta~\cite{zeiler2012adadelta}, RMSprop~\cite{hinton2012rmsprop} and Adam~\cite{kingma2014adam} 
have seen increasing popularity in training neural networks recently. 

The diagonal approximation of $\mathbf{G}$ amounts to setting an individual learning rate for each parameter. This balances the scale of parameters and often accelerates the convergence. 
Nevertheless, one wonders whether disregarding the correlation between the parameters, as captured by the off-diagonal elements of 
$\mathbf{G}$, has negative effects on the convergence speed. We will show that the answer to this question is a firm yes. A simple example is given in \textsection \ref{sec:parameter_dependency}. Experiments in \textsection \ref{sec:exp} also show the benefits of capturing off-diagonal elements of 
$\mathbf{G}$.

As such, in this paper we propose a new matrix approximation technique. The main idea is to split $\mathbf{G}$ into smaller blocks and approximate each block by one or two numbers. This, in return, will enable us 
to approximate the inverse of $\mathbf{G}$ by inverting small matrices, drastically reducing the time complexity while benefiting from the off-diagonal elements of $\mathbf{G}$. We incorporate our method into  AdaGrad  for training deep neural networks and empirically observe that the resulting algorithm outperforms AdaGrad with diagonal approximation in terms of the convergence speed.

\newpage

\section{Parameter Dependency}
\label{sec:parameter_dependency}

\begin{figure}[t!]
\centering
\subfloat[Original]{
\includegraphics[width=0.45\linewidth]{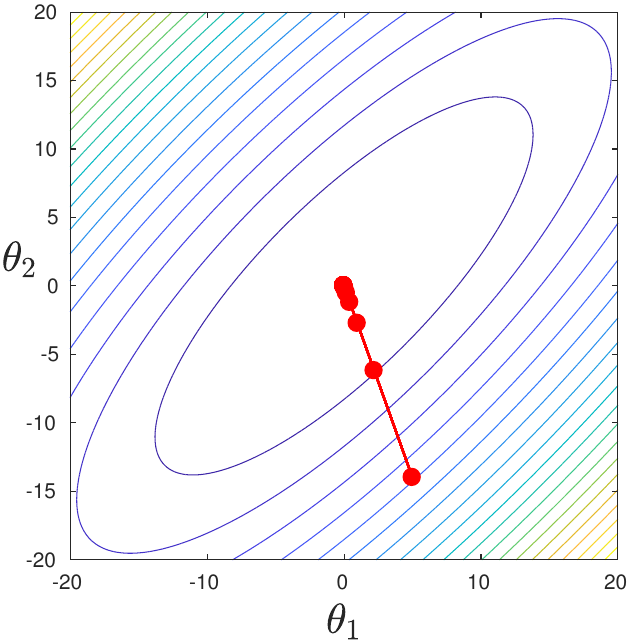}}
\subfloat[Transformed]{
\includegraphics[width=0.45\linewidth]{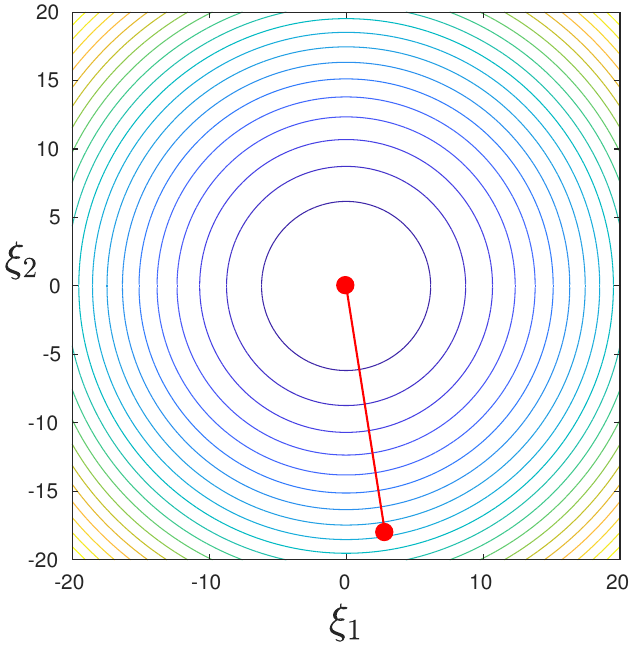}}
\caption{The effect of parametrization in convergence behavior of gradient descent (with line search). (a) Gradient descent moves in a zig-zag fashion 
as it approaches to the minimum. (b) By reparameterization the same problem, gradient descent finds the minimum in only one step (see the text for more details).}
\label{fig:gd}
\end{figure}

To show how the dependency of parameters affects the speed of gradient descent, consider a simple quadratic minimization problem
\begin{align}
\min_{\bm{\theta}}f(\bm{\theta}) = \frac{1}{2}\bm{\theta}^\intercal\mathbf{Q}\bm{\theta}\;
\end{align}
where 
\begin{align}
\bm{\theta} = [\theta_1, \theta_2]^\intercal, \quad \mathbf{Q} = 
\begin{bmatrix}
1.0 & -0.8 \\
-0.8 & 1.0
\end{bmatrix}.
\end{align} The eigenvalues of $\mathbf{Q}$ are $\lambda_1 = 1.8$ and $\lambda_2 = 0.2$.
The minimum is obtained at $\bm{\theta} = \mathbf{0}$. 
We note that the scale of $\theta_1$ equals that of $\theta_2$. However, the gradient of $\theta_1$ depends on the value of $\theta_2$ and vice versa as
\begin{align}
\nabla_{\bm{\theta}} f(\bm{\theta}) = \mathbf{Q}\bm{\theta}\;.
\end{align}

The gradient descent method, even with the optimal step size for each iteration, converges with a rate of 
$(\frac{\lambda_1 - \lambda_2}{\lambda_1 + \lambda_2})^2$ with such a parameterization.
However, with eigendecomposition $\mathbf{U}\mathbf{\Lambda}\mathbf{U}^\intercal = \mathbf{Q}$ and reparameterization with $\bm{\xi} = [\xi_1, \xi_2]^\intercal = \mathbf{\Lambda}^{\frac{1}{2}}\mathbf{U}\bm{\theta}$, then the optimization problem becomes 
\begin{align}
\min_{\bm{\xi}}f(\bm{\xi}) = \frac{1}{2} \bm{\xi}^\intercal\bm{\xi}\;.
\end{align}
The gradient of $\xi_1$ is not dependent on $\xi_2$ anymore and vice versa. As a result, gradient descent with the optimal step size converges to the minimum solution in only one iteration (see Figure~\ref{fig:gd} for an illustration).

The take home message from this textbook example is that \emph{algorithms that exploit the dependency between the parameters can prevail 
over the  ones that totally ignore such information}.

\newpage

\section{Block Mean Approximation}\label{sec:BMA}

We need the following definitions before introducing our proposed block mean approximation (BMA).

\begin{notation*}
$\mathbf{M}^{ij}$ denotes the ($i,j$)-th block of the matrix $\mathbf{M}$. $\mathbf{M}^{ij}_{mn}$ denotes the ($m,n$)-th element of the matrix $\mathbf{M}^{ij}$. 
\end{notation*}

\begin{defn}
For a diagonal matrix 
\begin{align}
\mathbf{\Lambda} = 
\begin{bmatrix}
\lambda_{1} &  0 & ... & 0 \\
0 & \lambda_{2}  & ... & 0 \\
... & ... & ... & ... \\
0 & 0 &  ... & \lambda_{L} 
\end{bmatrix}\;,
\end{align}
its diagonal expansion matrix with partition vector $\mathbf{s} = (s_1,...,s_L)$ is
\begin{align}
\bar{\mathbf{\Lambda}} = 
\begin{bmatrix}
\bar{\mathbf{\Lambda}}^{11} & 0 & ... & 0 \\
0 & \bar{\mathbf{\Lambda}}^{22} & ... & 0 \\
... & ... & ... & ... \\
0 &  0 &  ... & \bar{\mathbf{\Lambda}}^{LL} 
\end{bmatrix}
\end{align}
where each $\bar{\mathbf{\Lambda}}^{ii}$ is a diagonal matrix of size $s_i \times s_i$ with fixed diagonal elements of $\lambda_i$.
\end{defn}

\begin{defn}
For a matrix 
\begin{align}
\mathbf{B} =
\begin{bmatrix}
b_{11} & b_{12} & ... & b_{1L} \\
b_{21} & b_{22} & ... & b_{2L} \\
... & ... & ... & ... \\
b_{L1} & b_{L2} & ... & b_{LL} 
\end{bmatrix}\;,
\end{align}
its full expansion matrix with partition vector $\mathbf{s} = (s_1,...,s_L)$ is 
\begin{align}
\bar{\mathbf{B}} =
\begin{bmatrix}
\bar{\mathbf{B}}^{11} &  \bar{\mathbf{B}}^{12} & ... & \bar{\mathbf{B}}^{1L} \\
\bar{\mathbf{B}}^{21} &  \bar{\mathbf{B}}^{22} & ... & \bar{\mathbf{B}}^{2L} \\
... & ... & ... & ... \\
\bar{\mathbf{B}}^{L1} &  \bar{\mathbf{B}}^{L2} & ... & \bar{\mathbf{B}}^{LL}
\end{bmatrix}
\end{align}
where each block $\bar{\mathbf{B}}^{ij}$ is a matrix of constant value $b_{ij}$ and has $s_i$ number of rows.
\end{defn}

The above definitions are illustrated in Figure \ref{matrices}.

\begin{figure}[h!]
\centering
\subfloat[$\mathbf{\Lambda}$]{
\begin{tikzpicture}
\draw[step=0.2,gray!50,thin] (0,0) grid (0.6,0.6);
\draw[-,thick] (0,0) -- (0,0.6);
\draw[-,thick] (0,0) -- (0.6,0);
\draw[-,thick] (0.6,0) -- (0.6,0.6);
\draw[-,thick] (0,0.6) -- (0.6,0.6);

\filldraw[fill=blue!40, draw=gray] 
(0.0,0.6) rectangle (0.2,0.4);	
\filldraw[fill=black!30, draw=gray] 
(0.2,0.4) rectangle (0.4,0.2);	
\filldraw[fill=red, draw=gray] 
(0.4,0.2) rectangle (0.6,0.0);	

\end{tikzpicture}}
\hspace{0.1in}
\subfloat[$\bar{\mathbf{\Lambda}}$]{
\begin{tikzpicture}
\draw[step=0.2,gray!50,thin] (0,0) grid (2,2);
\draw[-,thick] (0,0) -- (0,2);
\draw[-,thick] (0,0) -- (2,0);
\draw[-,thick] (2,0) -- (2,2);
\draw[-,thick] (0,2) -- (2,2);

\filldraw[fill=blue!40, draw=gray] 
(0.0,2.0) rectangle (0.2,1.8);	
\filldraw[fill=blue!40, draw=gray] 
(0.2,1.8) rectangle (0.4,1.6);	
\filldraw[fill=black!30, draw=gray] 
(0.4,1.6) rectangle (0.6,1.4);	
\filldraw[fill=black!30, draw=gray] 
(0.6,1.4) rectangle (0.8,1.2);	
\filldraw[fill=black!30, draw=gray] 
(0.8,1.2) rectangle (1.0,1.0);	
\filldraw[fill=black!30, draw=gray] 
(1.0,1.0) rectangle (1.2,0.8);	
\filldraw[fill=black!30, draw=gray] 
(1.2,0.8) rectangle (1.4,0.6);	
\filldraw[fill=red, draw=gray] 
(1.4,0.6) rectangle (1.6,0.4);	
\filldraw[fill=red, draw=gray] 
(1.6,0.4) rectangle (1.8,0.2);	
\filldraw[fill=red, draw=gray] 
(1.8,0.2) rectangle (2.0,0.0);	

\end{tikzpicture}}
\hspace{0.1in}
\subfloat[$\mathbf{B}$]{
\begin{tikzpicture}
\draw[step=0.2,gray!50,thin] (0,0) grid (0.6,0.6);
\draw[-,thick] (0,0) -- (0,0.6);
\draw[-,thick] (0,0) -- (0.6,0);
\draw[-,thick] (0.6,0) -- (0.6,0.6);
\draw[-,thick] (0,0.6) -- (0.6,0.6);

\filldraw[fill=yellow, draw=gray] 
(0.0,0.0) rectangle (0.2,0.2);
\filldraw[fill=blue!80, draw=gray] 
(0.2,0.0) rectangle (0.4,0.2);
\filldraw[fill=gray!20, draw=gray] 
(0.4,0.0) rectangle (0.6,0.2);

\filldraw[fill=black!40, draw=gray] 
(0.0,0.2) rectangle (0.2,0.4);
\filldraw[fill=green!50, draw=gray] 
(0.2,0.2) rectangle (0.4,0.4);
\filldraw[fill=red, draw=gray] 
(0.4,0.2) rectangle (0.6,0.4);

\filldraw[fill=red!90, draw=gray] 
(0.0,0.4) rectangle (0.2,0.6);
\filldraw[fill=black!60, draw=gray] 
(0.2,0.4) rectangle (0.4,0.6);
\filldraw[fill=blue!50, draw=gray] 
(0.4,0.4) rectangle (0.6,0.6);
\end{tikzpicture}}
\hspace{0.1in}
\subfloat[$\bar{\mathbf{B}}$]{
\begin{tikzpicture}

\filldraw[fill=yellow, draw=gray] 
(0.0,0.0) rectangle (0.4,0.6);
\filldraw[fill=blue!80, draw=gray] 
(0.4,0.0) rectangle (1.4,0.6);
\filldraw[fill=gray!20, draw=gray] 
(1.4,0.0) rectangle (2.0,0.6);

\filldraw[fill=black!40, draw=gray] 
(0.0,0.6) rectangle (0.4,1.6);
\filldraw[fill=green!50, draw=gray] 
(0.4,0.6) rectangle (1.4,1.6);
\filldraw[fill=red, draw=gray] 
(1.4,0.6) rectangle (2.0,1.6);

\filldraw[fill=red!90, draw=gray] 
(0.0,1.6) rectangle (0.4,2.0);
\filldraw[fill=black!60, draw=gray] 
(0.4,1.6) rectangle (1.4,2.0);
\filldraw[fill=blue!50, draw=gray] 
(1.4,1.6) rectangle (2.0,2.0);

\draw[step=0.2,gray!50,thin] (0,0) grid (2,2);
\draw[-,thick] (0,0) -- (0,2);
\draw[-,thick] (0,0) -- (2,0);
\draw[-,thick] (2,0) -- (2,2);
\draw[-,thick] (0,2) -- (2,2);

\end{tikzpicture}}
\caption{Expansion matrices. (a) Diagonal  matrix $\mathbf{\Lambda}$. (b) Diagonal expansion of 
$\mathbf{\Lambda}$. (c) Full matrix $\mathbf{B}$. (d) Full expansion of $\mathbf{B}$. The partition vector in both cases 
is $\mathbf{s}=(2,5,3)$.} 
\label{matrices}
\end{figure}
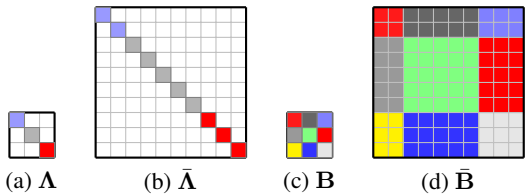

\begin{defn}
The block mean approximation of a matrix $\mathbf{M}$ with the partition vector $\mathbf{s}$ is
\begin{align}
\widehat{\mathbf{M}} = \bar{\mathbf{\Lambda}} + \bar{\mathbf{B}} \approx \mathbf{M} 
\end{align}
where $\bar{\mathbf{\Lambda}}$ and $\bar{\mathbf{B}}$ are the diagonal and full expansion matrices with partition vector $\mathbf{s}$, respectively.
\end{defn}

\begin{figure}[t!]
\centering	
\subfloat[Original]{
\frame{
\includegraphics[width=0.35\linewidth]{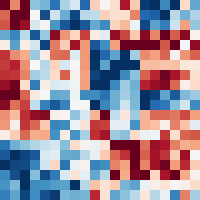}}}
\hspace{0.2in}
\subfloat[Approximate]{
\frame{
\includegraphics[width=0.35\linewidth]{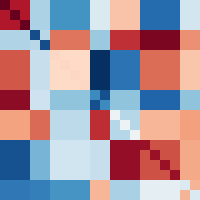}}}
\caption{Block mean approximation of a matrix.}
\label{fig:bma}
\end{figure}

The block mean approximation, as illustrated in Figure \ref{fig:bma}, allows one to efficiently store a big matrix as only $\mathbf{\Lambda}$, $\mathbf{B}$ and a partition vector $\mathbf{s}$ need to be kept. 
Given a square matrix, its optimal block mean approximation under Frobenius norm is given by the following.
\begin{prop}
The optimal block mean approximation of $\mathbf{M}$ with the partition vector $\mathbf{s}$ according to the Frobenius norm 
\begin{align}
\min_{\bar{\mathbf{\Lambda}},\bar{\mathbf{B}}} \| \bar{\mathbf{\Lambda}}+\bar{\mathbf{B}} - \mathbf{M}  \|_F^2\;
\end{align}
is given by
\begin{align}
b_{ij} &= 
\begin{cases}
0,  &i = j, s_i = 1, \\
\frac{\sum_{mn} \mathbf{M}^{ii}_{mn} - \sum_{m} \mathbf{M}^{ii}_{mm}}{s_{i}(s_{i}-1)},  &i = j, s_i \neq 1, \\
\frac{\sum_{mn}\mathbf{M}^{ij}_{mn}}{s_{i}s_{j}}, &i \neq j, 
\end{cases} \label{b_ij} \\
\lambda_{i} &= \frac{1}{s_{i}}\sum_m \mathbf{M}^{ii}_{mm} - b_{ii}\;. \label{lambda_i}
\end{align}
\end{prop}

Proposition 1 can be understood as follows. According to (\ref{b_ij}) and (\ref{lambda_i}), the non-diagonal block $\mathbf{M}^{ij}$ is approximated by $\widehat{\mathbf{M}}^{ij} = \mathbf{B}^{ij}$, whose value is the mean value of $\mathbf{M}^{ij}$, which minimizes the Frobenius form. The diagonal block $\mathbf{M}^{ii}$  is approximated by $\widehat{\mathbf{M}}^{ii} = \mathbf{\Lambda}^{ii} + \mathbf{B}^{ii}$, whose diagonal values are equal to the mean diagonal values of $\mathbf{M}^{ii}$ and its off-diagonal values are equal to the mean off-diagonal values of $\mathbf{M}^{ii}$, which again minimizes the Frobenius form.

The power of block mean approximation lies in the ease of computing its inverse and inverse square root matrices, as shown by the following theorems. All the proofs are relegated to the Appendix.

\newpage

\begin{theorem}
For invertible matrix $\bar{\mathbf{\Lambda}}+\bar{\mathbf{B}}$, where $\bar{\mathbf{\Lambda}}$ and $\bar{\mathbf{B}}$ are the diagonal and full expansion of $\mathbf{\Lambda}$ and $\mathbf{B}$ with respect to the partition vector $\mathbf{s}$, 
\begin{align}
(\bar{\mathbf{\Lambda}}+\bar{\mathbf{B}})^{-1} = \bar{\mathbf{\Lambda}}^{-1} + \bar{\mathbf{D}}
\end{align}
 where $\bar{\mathbf{D}}$ is the full expansion matrix with partition vector $\mathbf{s}$ of 
\begin{align}
\mathbf{D} =  (\mathbf{\Lambda} \mathbf{S} + \mathbf{S}\mathbf{B}\mathbf{S} )^{-1} - (\mathbf{\Lambda}\mathbf{S})^{-1} 
\end{align}
where $\mathbf{S}= \text{\normalfont diag}(\mathbf{s})$.
\end{theorem}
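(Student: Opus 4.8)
The plan is to exploit the fact that $\bar{\mathbf{B}}$ is a structured low-rank object. With $d=\sum_i s_i$, I will introduce the $d\times L$ matrix $\mathbf{U}$ whose $i$-th column is the $0/1$ indicator of the coordinates lying in the $i$-th block. Straight from the definitions one reads off $\mathbf{U}^\intercal\mathbf{U}=\mathbf{S}$, and that every full expansion is $\bar{\mathbf{X}}=\mathbf{U}\mathbf{X}\mathbf{U}^\intercal$ (so $\bar{\mathbf{B}}=\mathbf{U}\mathbf{B}\mathbf{U}^\intercal$ and $\bar{\mathbf{D}}=\mathbf{U}\mathbf{D}\mathbf{U}^\intercal$). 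Because $\bar{\mathbf{\Lambda}}$ is diagonal and constant on each block, it also commutes through $\mathbf{U}$ in the sense $\bar{\mathbf{\Lambda}}\mathbf{U}=\mathbf{U}\mathbf{\Lambda}$ and $\bar{\mathbf{\Lambda}}^{-1}\mathbf{U}=\mathbf{U}\mathbf{\Lambda}^{-1}$ (with $\bar{\mathbf{\Lambda}}^{-1}$ the diagonal expansion of $\mathbf{\Lambda}^{-1}$), together with the transposed versions. These three facts are all I need about the block structure.

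\textbf{Where the formula comes from.} Writing $\bar{\mathbf{\Lambda}}+\bar{\mathbf{B}}=\bar{\mathbf{\Lambda}}+\mathbf{U}\mathbf{B}\mathbf{U}^\intercal$ exhibits the matrix as a rank-$\le L$ update of a diagonal matrix, so I would apply the Sherman--Morrison--Woodbury identity; substituting the relations above collapses every $d\times d$ factor to an $L\times L$ one and produces $(\bar{\mathbf{\Lambda}}+\bar{\mathbf{B}})^{-1}=\bar{\mathbf{\Lambda}}^{-1}-\mathbf{U}\big[\mathbf{\Lambda}^{-1}(\mathbf{B}^{-1}+\mathbf{S}\mathbf{\Lambda}^{-1})^{-1}\mathbf{\Lambda}^{-1}\big]\mathbf{U}^\intercal$. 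Since $\mathbf{U}(\cdot)\mathbf{U}^\intercal$ is exactly full expansion, this is already of the claimed form with $\mathbf{D}=-\mathbf{\Lambda}^{-1}(\mathbf{B}^{-1}+\mathbf{S}\mathbf{\Lambda}^{-1})^{-1}\mathbf{\Lambda}^{-1}$.

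\textbf{Reconciling the two expressions for $\mathbf{D}$.} It remains to match this against the stated $(\mathbf{\Lambda}\mathbf{S}+\mathbf{S}\mathbf{B}\mathbf{S})^{-1}-(\mathbf{\Lambda}\mathbf{S})^{-1}$, which is the one genuinely algebraic step. Factoring $\mathbf{B}^{-1}+\mathbf{S}\mathbf{\Lambda}^{-1}=\mathbf{B}^{-1}(\mathbf{\Lambda}+\mathbf{B}\mathbf{S})\mathbf{\Lambda}^{-1}$ simplifies my $\mathbf{D}$ to $-(\mathbf{\Lambda}+\mathbf{B}\mathbf{S})^{-1}\mathbf{B}\mathbf{\Lambda}^{-1}$, while the resolvent identity $\mathbf{P}^{-1}-\mathbf{Q}^{-1}=-\mathbf{P}^{-1}(\mathbf{P}-\mathbf{Q})\mathbf{Q}^{-1}$ applied with $\mathbf{P}=\mathbf{\Lambda}\mathbf{S}+\mathbf{S}\mathbf{B}\mathbf{S}=\mathbf{S}(\mathbf{\Lambda}+\mathbf{B}\mathbf{S})$ and $\mathbf{Q}=\mathbf{\Lambda}\mathbf{S}$ turns the stated formula into the same thing. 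The only structural input is that the diagonal matrices $\mathbf{\Lambda}$ and $\mathbf{S}$ commute. The main obstacle is precisely this bookkeeping: $\mathbf{B}$ commutes with neither $\mathbf{\Lambda}$ nor $\mathbf{S}$, so factor order must be preserved throughout, and I must check that the inverted matrices exist. The latter is not an extra hypothesis: testing the positive-definite form on the range of $\mathbf{U}$ gives $\mathbf{w}^\intercal(\mathbf{\Lambda}\mathbf{S}+\mathbf{S}\mathbf{B}\mathbf{S})\mathbf{w}=(\mathbf{U}\mathbf{w})^\intercal(\bar{\mathbf{\Lambda}}+\bar{\mathbf{B}})(\mathbf{U}\mathbf{w})>0$, so $\mathbf{\Lambda}\mathbf{S}+\mathbf{S}\mathbf{B}\mathbf{S}$ is nonsingular.

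\textbf{A rigorous shortcut.} Because Woodbury formally invokes $\mathbf{B}^{-1}$, which the statement does not assume, I would actually close the argument by verifying the candidate directly: expanding $(\bar{\mathbf{\Lambda}}+\mathbf{U}\mathbf{B}\mathbf{U}^\intercal)(\bar{\mathbf{\Lambda}}^{-1}+\mathbf{U}\mathbf{D}\mathbf{U}^\intercal)$ and using $\mathbf{U}^\intercal\mathbf{U}=\mathbf{S}$ together with the commutation relations gives $\mathbf{I}+\mathbf{U}[\mathbf{\Lambda}\mathbf{D}+\mathbf{B}\mathbf{\Lambda}^{-1}+\mathbf{B}\mathbf{S}\mathbf{D}]\mathbf{U}^\intercal$. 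Since $\mathbf{U}$ has full column rank this equals $\mathbf{I}$ iff the bracket vanishes, i.e. iff $(\mathbf{\Lambda}+\mathbf{B}\mathbf{S})\mathbf{D}=-\mathbf{B}\mathbf{\Lambda}^{-1}$, and substituting the stated $\mathbf{D}$ confirms it by the same algebra as above. I would present the Woodbury route for intuition and this direct check for rigor.
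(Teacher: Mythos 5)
Your proof is correct, and while it shares the paper's overall skeleton---both arguments reduce the claim to the $L\times L$ identity $\mathbf{B}\mathbf{\Lambda}^{-1}+\mathbf{\Lambda}\mathbf{D}+\mathbf{B}\mathbf{S}\mathbf{D}=\mathbf{0}$ and then lift it to the expansion matrices---your machinery is genuinely different and in two places tighter than the paper's. The paper obtains the compact form $\mathbf{D}=-(\mathbf{\Lambda}+\mathbf{B}\mathbf{S})^{-1}\mathbf{B}\mathbf{\Lambda}^{-1}$ via the Kailath variant of the Woodbury identity (which, unlike the textbook Woodbury you rightly flag, never invokes $\mathbf{B}^{-1}$, so your concern does not apply to the paper's own route), and it then asserts that the small-matrix identity holds ``equivalently'' as $\bar{\mathbf{B}}\bar{\mathbf{\Lambda}}^{-1}+\bar{\mathbf{\Lambda}}\bar{\mathbf{D}}+\bar{\mathbf{B}}\bar{\mathbf{D}}=\mathbf{0}$ among the expansion matrices. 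That lifting step silently uses the multiplication rules $\bar{\mathbf{B}}\bar{\mathbf{D}}=\overline{\mathbf{B}\mathbf{S}\mathbf{D}}$, $\bar{\mathbf{\Lambda}}\bar{\mathbf{D}}=\overline{\mathbf{\Lambda}\mathbf{D}}$ and $\bar{\mathbf{B}}\bar{\mathbf{\Lambda}}^{-1}=\overline{\mathbf{B}\mathbf{\Lambda}^{-1}}$, which the paper never proves; your indicator-matrix formalism ($\bar{\mathbf{X}}=\mathbf{U}\mathbf{X}\mathbf{U}^\intercal$, $\mathbf{U}^\intercal\mathbf{U}=\mathbf{S}$, $\bar{\mathbf{\Lambda}}\mathbf{U}=\mathbf{U}\mathbf{\Lambda}$) supplies exactly these facts in one line, so your write-up closes a genuine gap in the paper's exposition. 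You also replace Kailath--Woodbury by the elementary resolvent identity, and you derive nonsingularity of $\mathbf{\Lambda}\mathbf{S}+\mathbf{S}\mathbf{B}\mathbf{S}$ from positive definiteness of $\bar{\mathbf{\Lambda}}+\bar{\mathbf{B}}$ tested on the range of $\mathbf{U}$, a hypothesis the paper uses implicitly. Two minor points to tidy: your verification establishes only $(\bar{\mathbf{\Lambda}}+\bar{\mathbf{B}})(\bar{\mathbf{\Lambda}}^{-1}+\bar{\mathbf{D}})=\mathbf{I}$, so state explicitly that a one-sided inverse of a square matrix is two-sided (the paper instead checks both orders); and, like the paper, you tacitly assume $\mathbf{\Lambda}$ is invertible, which does not follow from positive definiteness of $\bar{\mathbf{\Lambda}}+\bar{\mathbf{B}}$ alone but is forced by the statement's reference to $\bar{\mathbf{\Lambda}}^{-1}$ and $(\mathbf{\Lambda}\mathbf{S})^{-1}$.
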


\begin{theorem}
For invertible matrix $\bar{\mathbf{\Lambda}}+\bar{\mathbf{B}}$, where $\bar{\mathbf{\Lambda}}$ and $\bar{\mathbf{B}}$ are the diagonal and full expansion of $\mathbf{\Lambda}$ and $\mathbf{B}$ with respect to the partition vector $\mathbf{s}$,
\begin{align}
(\bar{\mathbf{\Lambda}}+\bar{\mathbf{B}})^{-\frac{1}{2}} = \bar{\mathbf{\Lambda}}^{-\frac{1}{2}} + \bar{\mathbf{D}}
\end{align}
 where $\bar{\mathbf{D}}$ is the full expansion matrix with partition vector $\mathbf{s}$ of 
\begin{align}
\mathbf{D} = \mathbf{S}^{-\frac{1}{2}}\left[(\mathbf{\Lambda} + \mathbf{S}^{\frac{1}{2}}\mathbf{B}\mathbf{S}^{\frac{1}{2}} )^{-\frac{1}{2}} - \mathbf{\Lambda}^{-\frac{1}{2}}\right]\mathbf{S}^{-\frac{1}{2}} 
\end{align}
where $\mathbf{S}= \text{\normalfont diag}(\mathbf{s})$.
\end{theorem}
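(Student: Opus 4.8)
The plan is to orthogonally block-diagonalize $\mathbf{A} := \bar{\mathbf{\Lambda}}+\bar{\mathbf{B}}$ in a basis adapted to the partition, read off the inverse square root spectrally, and then re-expand. Introduce the $N\times L$ block-indicator matrix $\mathbf{E}$ (with $N=\sum_i s_i$), whose $i$-th column is the indicator of block $i$. One checks immediately the three identities $\mathbf{E}^\intercal\mathbf{E}=\mathbf{S}$, $\bar{\mathbf{B}}=\mathbf{E}\mathbf{B}\mathbf{E}^\intercal$, and the commutation $\bar{\mathbf{\Lambda}}\mathbf{E}=\mathbf{E}\mathbf{\Lambda}$. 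Set $\mathbf{U}_1:=\mathbf{E}\mathbf{S}^{-\frac12}$, so that $\mathbf{U}_1^\intercal\mathbf{U}_1=\mathbf{I}_L$ (its columns are the normalized block-constant vectors $\mathbf{1}_{s_i}/\sqrt{s_i}$), and let $\mathbf{U}_2$ be an orthonormal basis of the within-block mean-zero subspace, chosen block by block so that $\mathbf{E}^\intercal\mathbf{U}_2=0$ and each column is an eigenvector of $\bar{\mathbf{\Lambda}}$. Then $[\mathbf{U}_1,\mathbf{U}_2]$ is orthogonal on $\mathbb{R}^N$.

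Next I would show $\mathbf{A}$ is block-diagonal in this basis. The key computation uses the identities above:
\begin{align}
\mathbf{U}_1^\intercal\mathbf{A}\mathbf{U}_1 = \mathbf{S}^{-\frac12}\mathbf{E}^\intercal(\bar{\mathbf{\Lambda}}+\bar{\mathbf{B}})\mathbf{E}\mathbf{S}^{-\frac12} = \mathbf{\Lambda}+\mathbf{S}^{\frac12}\mathbf{B}\mathbf{S}^{\frac12} =: \mathbf{P},
\end{align}
since $\mathbf{E}^\intercal\bar{\mathbf{\Lambda}}\mathbf{E}=\mathbf{S}\mathbf{\Lambda}$ and $\mathbf{E}^\intercal\bar{\mathbf{B}}\mathbf{E}=\mathbf{S}\mathbf{B}\mathbf{S}$. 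Because $\bar{\mathbf{B}}\mathbf{U}_2=\mathbf{E}\mathbf{B}(\mathbf{E}^\intercal\mathbf{U}_2)=0$, the full block drops off the mean-zero subspace, giving $\mathbf{A}\mathbf{U}_2=\bar{\mathbf{\Lambda}}\mathbf{U}_2=\mathbf{U}_2\mathbf{\Lambda}_2$ with $\mathbf{\Lambda}_2$ diagonal ($\lambda_i$ repeated $s_i-1$ times), and the cross term $\mathbf{U}_1^\intercal\mathbf{A}\mathbf{U}_2$ vanishes by the same two facts. Hence $[\mathbf{U}_1,\mathbf{U}_2]^\intercal\mathbf{A}[\mathbf{U}_1,\mathbf{U}_2]=\mathrm{diag}(\mathbf{P},\mathbf{\Lambda}_2)$, and positive definiteness of $\mathbf{A}$ is exactly $\mathbf{P}\succ0$ together with $\mathbf{\Lambda}\succ0$.

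With $\mathbf{A}$ orthogonally block-diagonalized, the inverse square root is obtained blockwise as $\mathbf{A}^{-\frac12} = \mathbf{U}_1\mathbf{P}^{-\frac12}\mathbf{U}_1^\intercal + \mathbf{U}_2\mathbf{\Lambda}_2^{-\frac12}\mathbf{U}_2^\intercal$, which is manifestly symmetric positive definite and squares to $\mathbf{A}^{-1}$, hence is the unique such root. Finally I would invoke the parallel spectral expansion $\bar{\mathbf{\Lambda}}^{-\frac12}=\mathbf{U}_1\mathbf{\Lambda}^{-\frac12}\mathbf{U}_1^\intercal+\mathbf{U}_2\mathbf{\Lambda}_2^{-\frac12}\mathbf{U}_2^\intercal$ (valid since $\bar{\mathbf{\Lambda}}^{-\frac12}\mathbf{U}_1=\mathbf{U}_1\mathbf{\Lambda}^{-\frac12}$ and $\bar{\mathbf{\Lambda}}^{-\frac12}\mathbf{U}_2=\mathbf{U}_2\mathbf{\Lambda}_2^{-\frac12}$) to cancel the $\mathbf{U}_2$ term and obtain
\begin{align}
\mathbf{A}^{-\frac12} = \bar{\mathbf{\Lambda}}^{-\frac12} + \mathbf{U}_1\big(\mathbf{P}^{-\frac12}-\mathbf{\Lambda}^{-\frac12}\big)\mathbf{U}_1^\intercal.
\end{align}
Substituting $\mathbf{U}_1=\mathbf{E}\mathbf{S}^{-\frac12}$ turns the correction into $\mathbf{E}\,\mathbf{D}\,\mathbf{E}^\intercal=\bar{\mathbf{D}}$ with $\mathbf{D}=\mathbf{S}^{-\frac12}\big[(\mathbf{\Lambda}+\mathbf{S}^{\frac12}\mathbf{B}\mathbf{S}^{\frac12})^{-\frac12}-\mathbf{\Lambda}^{-\frac12}\big]\mathbf{S}^{-\frac12}$, which is exactly the claimed formula.

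I expect the main obstacle to be organizational rather than deep: one must verify the three expansion identities and, with care, construct $\mathbf{U}_2$ block by block so that it simultaneously annihilates $\mathbf{E}^\intercal$ and diagonalizes $\bar{\mathbf{\Lambda}}$. The one genuine subtlety is the passage from \emph{squares to $\mathbf{A}^{-1}$} to \emph{equals $\mathbf{A}^{-\frac12}$}, which needs the candidate to be symmetric positive definite; the spectral construction supplies this for free, whereas a direct verification through Theorem~1 (expanding $(\bar{\mathbf{\Lambda}}^{-\frac12}+\bar{\mathbf{D}})^2$ via the product rule $\bar{\mathbf{D}}\bar{\mathbf{D}}=\overline{\mathbf{D}\mathbf{S}\mathbf{D}}$) would leave positive definiteness to be argued separately.
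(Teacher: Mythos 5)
Your proposal is correct, but it takes a genuinely different route from the paper's. The paper argues purely algebraically: it first proves the inverse formula (Theorem~1) via the Kailath variant of the Woodbury identity together with the rule that products of expansion matrices are expansions of $\mathbf{S}$-weighted products, then proves a separate Lemma giving $(\bar{\mathbf{\Lambda}}+\bar{\mathbf{B}})^{\frac{1}{2}} = \bar{\mathbf{\Lambda}}^{\frac{1}{2}}+\bar{\mathbf{D}}$ by expanding the square of the candidate, and finally obtains Theorem~2 by substituting the Theorem~1 inverse into the Lemma. You instead exhibit an explicit orthogonal basis adapted to the partition --- the normalized block-indicator vectors $\mathbf{U}_1=\mathbf{E}\mathbf{S}^{-\frac{1}{2}}$ together with the within-block mean-zero vectors $\mathbf{U}_2$ --- and show $\bar{\mathbf{\Lambda}}+\bar{\mathbf{B}}$ is block-diagonalized as $\mathrm{diag}(\mathbf{P},\mathbf{\Lambda}_2)$ with $\mathbf{P}=\mathbf{\Lambda}+\mathbf{S}^{\frac{1}{2}}\mathbf{B}\mathbf{S}^{\frac{1}{2}}$, after which the inverse square root is read off spectrally; all your intermediate identities ($\mathbf{E}^\intercal\mathbf{E}=\mathbf{S}$, $\bar{\mathbf{B}}=\mathbf{E}\mathbf{B}\mathbf{E}^\intercal$, $\bar{\mathbf{\Lambda}}\mathbf{E}=\mathbf{E}\mathbf{\Lambda}$, the vanishing cross terms) check out. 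Your route buys three things: it delivers Theorem~1, Lemma~1, and Theorem~2 simultaneously from a single decomposition; it explains conceptually where the matrix $\mathbf{\Lambda}+\mathbf{S}^{\frac{1}{2}}\mathbf{B}\mathbf{S}^{\frac{1}{2}}$ comes from (it is the compression of $\bar{\mathbf{\Lambda}}+\bar{\mathbf{B}}$ onto the block-constant subspace); and, as you correctly observe, it supplies for free the symmetry and positive definiteness needed to upgrade ``squares to the right matrix'' to ``equals the unique positive definite root'' --- a step the paper's Lemma~1 performs silently, passing from $(\bar{\mathbf{\Lambda}}^{\frac{1}{2}}+\bar{\mathbf{D}})^2=\bar{\mathbf{\Lambda}}+\bar{\mathbf{B}}$ to $\bar{\mathbf{\Lambda}}^{\frac{1}{2}}+\bar{\mathbf{D}}=(\bar{\mathbf{\Lambda}}+\bar{\mathbf{B}})^{\frac{1}{2}}$ without verifying positive semidefiniteness of the left-hand side. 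What the paper's route buys in exchange is elementarity: only matrix algebra and one standard identity, no spectral or invariant-subspace reasoning. One small imprecision in your write-up: positive definiteness of $\bar{\mathbf{\Lambda}}+\bar{\mathbf{B}}$ is equivalent to $\mathbf{P}\succ 0$ together with $\mathbf{\Lambda}_2\succ 0$, not $\mathbf{\Lambda}\succ 0$; when some $s_i=1$ the value $\lambda_i$ enters only through $\mathbf{P}$, so $\mathbf{\Lambda}\succ 0$ is a genuinely extra hypothesis --- but it is one that both you and the paper need anyway for $\bar{\mathbf{\Lambda}}^{-\frac{1}{2}}$ and the stated formula for $\mathbf{D}$ to make sense.
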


The importance of the above theorems can be understood by noting that inverting  $\mathbf{G} \in \mathbb{R}^{d \times d}$ by splitting it into $L\times L$ blocks
has a complexity of $O(L^3)$, which can be significantly faster than $O(d^3)$ flops required to obtain $\mathbf{G}^{-1}$ or $\mathbf{G}^{-\frac{1}{2}}$.

\section{BMA for Neural Networks}
As block mean approximation divides a matrix into $L\times L$ blocks, a natural question to ask is \emph{how much prior knowledge is needed to
determine such a block structure?} 
In training neural networks, we recommend to group the parameters in each layer (or even each unit) into a block such that $\mathbf{G}$ with block mean approximation represents the scale and dependency between layers (or units). This comes naturally as the output and gradient of each layer often depends on one another. 

There are several work that use heuristics to set an individual learning rate for layers of a deep network~\cite{singh2015layer,you2017scaling,abu2017proportionate}. Nevertheless, even with such heuristics, the dependency between parameters is ignored. 
In contrast, the BMA gives a principled way to set learning rates per layers in a deep network while capturing the dependency between layers.

In Figure~\ref{eFIM}, we compute the empirical Fisher information matrix $\mathbb{E}_{\mathbf{x}\sim p_{\text{data}}}[ \frac{\partial \log p(\mathbf{x}|\bm{\theta})}{\partial \bm{\theta}}\frac{\partial \log p(\mathbf{x}|\bm{\theta})}{\partial \bm{\theta}}^\intercal ]$ for a convolutional neural network (described in Table 1) and compare BMA with different block partitions. Even for the finest approximation in Figure~\ref{eFIM}, only $45\times 45$ matrices need to be constructed and inverted.

\newpage

\begin{figure}
\centering
\subfloat[Original]{\includegraphics[width=0.5\linewidth]{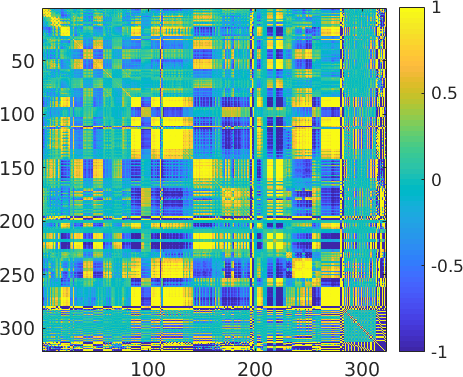}}
\subfloat[$10\times 10$ blocks]{\includegraphics[width=0.5\linewidth]{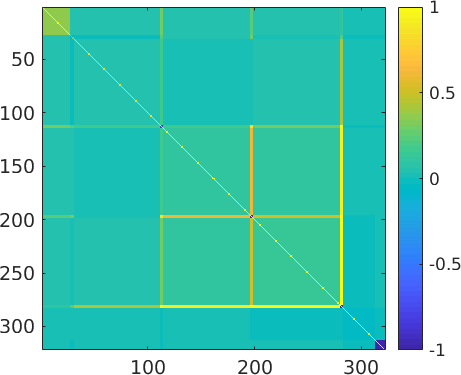}}

\subfloat[$27\times 27$ blocks]{\includegraphics[width=0.5\linewidth]{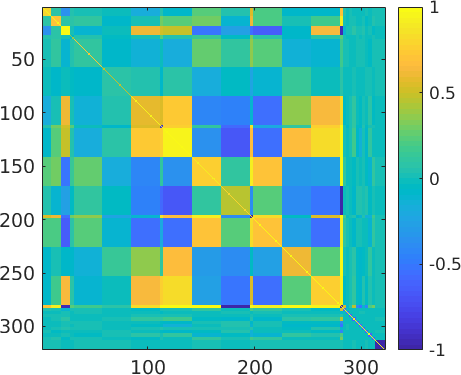}}
\subfloat[$45\times 45$ blocks]{\includegraphics[width=0.5\linewidth]{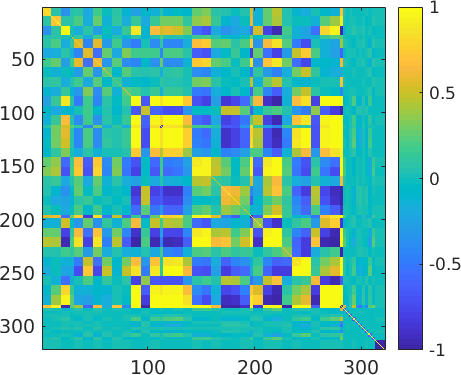}}

\caption{Block mean approximations of the empirical Fisher information matrix of a neural network model. The values are normalized for better visualization. (a) The original matrix has a size of $322\times 322$. (b) Each block represents weights or bias in a layer. MSE (mean square error) = 0.1036. (c) Each block represents weights or bias in a unit. MSE=0.1024. (d) Each block represents a group of weights or bias in a unit. MSE=0.0998.}
\label{eFIM}
\end{figure}

\section{Related Work}
There are several generic matrix approximation techniques which have been applied to second order optimization methods. Aside from the  diagonal approximation, 
block diagonal approximation has been applied to AdaGrad~\cite{duchi2011adaptive} and the Gauss-Newton method~\cite{botev2017practical}. 
Low rank approximation has been applied to natural gradient~\cite{roux2008topmoumoute} and AdaGrad~\cite{krummenacher2016scalable}. 
Kronecker approximation \cite{martens2015optimizing,grosse2016kronecker}, sparse approximation \cite{grosse2015scaling} and quasi-diagonal approximation \cite{ollivier2015riemannian} have been applied to natural gradient.

We stress that the block mean approximation takes into account all the elements of $\mathbf{G}$, 
while the diagonal and block diagonal approximations neglect most of the elements of $\mathbf{G}$.
Furthermore, the block mean approximation does not force any low-rank assumption, which cannot be guaranteed in general. For example, when $\mathbf{G} = \sigma \mathbf{I}$ for $\sigma > 0$, $\mathbf{G}$ has the singular values all equal to $\sigma$ and thus does not have a low-rank structure. The other advantage of the BMA is its ease of implementation. The approximate matrix $\widehat{\mathbf{G}}$ does not need to be constructed explicitly in general. 
This is shown in \textsection~\ref{sec:adagrad_bma}.

\section{AdaGrad with BMA}\label{sec:adagrad_bma}

For unconstrained stochastic optimization problems, the full version of AdaGrad \cite{duchi2011adaptive} has the following update rule,
\begin{align}
\bm{\theta}_{t+1} = \bm{\theta}_{t} - \eta \mathbf{G}_t^{-1} \mathbf{g}_{t}
\label{ada_grad}
\end{align}
where $\mathbf{G}_t = \mathbf{H}_t^{\frac{1}{2}}$ and $\mathbf{H}_t = \sum_{\tau=1}^t \mathbf{g}_{\tau} \mathbf{g}_{\tau}^\intercal$.

We approximate the gradient outer product matrix in the following form
\begin{align}
\widehat{\mathbf{H}}_t = \mathbf{Z}_t\mathbf{F}_t\mathbf{Z}_t \approx \mathbf{H}_t 
\label{H_t}
\end{align}
where $\mathbf{Z}_t = \text{diag}(\mathbf{z}_t)$ is a diagonal matrix and $\mathbf{F}_t$ is a positive definite matrix. As AdaGrad requires the computation of $\mathbf{H}_t^{-\frac{1}{2}}$, one needs to choose $\mathbf{F}_t$ such that $\mathbf{F}_t^{-\frac{1}{2}}$ is easy to obtain.

To derive the algorithm, we first define the following notations.
\begin{notation*}
We divide vector $\mathbf{g}$ into $L$ blocks such that $\mathbf{g} = (\mathbf{g}^1,...,\mathbf{g}^L)$. $\mathbf{g}^i$ denotes the $i$-th block of vector $\mathbf{g}$. $g^i_m$ denotes the $m$-th element of vector $\mathbf{g}^i$. $({g}^i_m)^2$ denotes the square value of ${g}^i_m$. $\mathbf{g}_t$ denote a vector at step $t$. $\mathbf{g}_t^i$ denote the $i$-th block of $\mathbf{g}_t$. $g_{t,m}^i$ denote the $m$-th element of $\mathbf{g}_t^i$. $\mathbf{g}^2$ denote elementwise square of $\mathbf{g}$. $\sqrt{\mathbf{g}}$ denotes elementwise square root of $\mathbf{g}$. $\mathbf{g} + a$ denotes each element of $\mathbf{g}$ is added by scalar $a$.
\end{notation*}

\subsection{Diagonal Approximation}
If we set $\mathbf{z}_t$ with elements
\begin{align}
z_{t,i} = \sqrt{\sum_{\tau=1}^t (g_{\tau,i})^2}
\label{Z}
\end{align}
and  $\mathbf{F}_t = \mathbf{I}$, $\widehat{\mathbf{H}}_t$ is reduced to diagonal approximation. 

\subsection{Block Mean Approximation}
In order to capture the off-diagonal elements of $\mathbf{H}$, we approximate $\mathbf{F}_t$ with the block mean approximation proposed in \textsection \ref{sec:BMA}.
We set $\mathbf{z}_t$ as in (\ref{Z})
and seek 
\begin{align}
\mathbf{F}_t = \bar{\mathbf{\Lambda}} + \bar{\mathbf{B}} \approx \mathbf{Z}_t^{-1}(\sum_{\tau=1}^t\mathbf{g}_{\tau}\mathbf{g}_{\tau}^\intercal)\mathbf{Z}_t^{-1}.
\label{F_t}
\end{align}

With the block mean approximation, (\ref{H_t}) can be interpreted as follows: $\mathbf{Z}_t$ gives an individual learning rate of each parameter and $\mathbf{F}_t$ captures the dependency between the groups of parameters.

To realize block mean approximation, we partition the parameters into $L$ groups.

\newpage

Let $\mathbf{g}_t = (\mathbf{g}_t^1, \mathbf{g}_t^2,...,\mathbf{g}_t^L)$. 
Define $\mathbf{u}_t$ and $\mathbf{v}_t$ with
\begin{align}
u_{t,i} =\sum_{m}  g_{t,m}^i, \quad 
v_{t,i} = \sum_{m} z_{t,m}^i,
\end{align}
respectively. Let $\mathbf{U}_t = \sum_{\tau=1}^t \mathbf{u}_{\tau}\mathbf{u}_{\tau}^\intercal$.
To approximate $\mathbf{Z}_t^{-1}(\sum_{\tau=1}^t\mathbf{g}_{\tau}\mathbf{g}_{\tau}^\intercal)\mathbf{Z}_t^{-1}$, we choose $\bar{\mathbf{\Lambda}}$ and $\bar{\mathbf{B}}$ to be the expansion matrices of $\mathbf{\Lambda}$ and $\mathbf{B}$
\begin{align}
\mathbf{\Lambda} &= \mathbf{I}, \quad \mathbf{B} = \mathbf{S}^{-\frac{1}{2}}\frac{\mathbf{U}_t - \text{diag}(\mathbf{U}_t)}{\mathbf{v}_t\mathbf{v}_t^\intercal} \mathbf{S}^{-\frac{1}{2}} 
\end{align}
where the division is elementwise. Based on Theorem 2, the inverse square root is $\mathbf{I} + \bar{\mathbf{D}}$ where $\bar{\mathbf{D}}$ is the expansion matrix of 
\begin{align}
\mathbf{D} = \mathbf{S}^{-\frac{1}{2}}\left[\left(\mathbf{I} + \frac{\mathbf{U}_t - \text{diag}(\mathbf{U}_t)}{\mathbf{v}_t\mathbf{v}_t^\intercal}\right)^{-\frac{1}{2}} - \mathbf{I}\right]\mathbf{S}^{-\frac{1}{2}}.  \label{D}
\end{align}
The inverse square root in (\ref{D}) can be computed as follows.
Let $\mathbf{RVR}^\intercal$ be the eigendecomposition of a matrix, then its inverse square root is $\mathbf{R}\mathbf{V}^{-\frac{1}{2}}\mathbf{R}^\intercal$. In case where the eigenvalues are zeros or too small, we clamp the eigenvalues to have a minimal value before computing $\mathbf{V}^{-\frac{1}{2}}$.

We call the above algorithm AdaGrad-BMA, which is summarized in Algorithm 1.  
The eigendecomposition has time complexity $O(L^3)$. Therefore, for parameters of dimension $d$ and partitioned into $L$ blocks, AdaGrad-BMA has time complexity $O(L^3 + d)$ per iteration.

\begin{algorithm}[t]
   \caption{AdaGrad-BMA}
   \label{alg:example}
\begin{algorithmic}[1]
   \STATE {\bfseries Input:} Objective function $f(\bm{\theta})$ with parameters $\bm{\theta}$
   \STATE {\bfseries Input:} A partition of parameters $\bm{\theta}=\{\bm{\theta}^1,...,\bm{\theta}^L\}$
   \STATE {\bfseries Input:} Partition vector $\mathbf{s}$ that $s_i=$  size($\bm{\theta}^i$)
   \STATE {\bfseries Input:} Hyperparameters $\eta$ and $\epsilon$
   \STATE Initialize $\mathbf{U} = \mathbf{0}$, $\mathbf{v} = \mathbf{0}$, $\mathbf{r} = \mathbf{0}$, $\mathbf{S}=\text{diag}(\mathbf{s})$
   \FOR{$t=1$ {\bfseries to} $T$}
   \STATE $\mathbf{g} \leftarrow \nabla_{\bm{\theta}}f_t(\bm{\theta}_t)$    
   \STATE $\mathbf{r} \leftarrow \mathbf{r} + \mathbf{g}^2$ 
   \STATE $\mathbf{z} \leftarrow \sqrt{\mathbf{r}+\epsilon}$   
   \FOR{$i=1$ {\bfseries to} $L$}   
   \STATE   $u_i \leftarrow  \sum_m {g}^i_{m} $
   \STATE   $v_i \leftarrow  \sum_m {z}^i_{m}  $
   \ENDFOR   
   \STATE $\mathbf{U} \leftarrow \mathbf{U} + \mathbf{u}\mathbf{u}^\intercal$           
   \STATE Compute $\mathbf{D}$ according to (\ref{D})
   \STATE $\mathbf{g} \leftarrow \mathbf{Z}^{-\frac{1}{2}} \mathbf{g}$        
   \FOR{$i=1$ {\bfseries to} $L$}
   \STATE $\mathbf{g}^i \leftarrow \mathbf{g}^i + \sum_j\mathbf{D}_{ij}{u}_j$  
   \ENDFOR        
   \STATE $\mathbf{g} \leftarrow \mathbf{Z}^{-\frac{1}{2}} \mathbf{g}$     
   \STATE $\bm{\theta} \leftarrow \bm{\theta} - \eta \mathbf{g} $
   \ENDFOR        
\end{algorithmic}
\end{algorithm}

\newpage

\section{Experiments}\label{sec:exp}

We evaluate AdaGrad-BMA in training convolutional neural networks, against the full version of AdaGrad (AdaGrad-full) and AdaGrad with diagonal approximation (AdaGrad-diag). For AdaGrad-BMA, we group the weights and the bias parameters separately for each layer. For a model of $l$  convolution or fully connected layers, we partition $\mathbf{G}$ into $L\times L$ blocks with BMA where $L = 2l$.

The experiments are done on two standard datasets MNIST and CIFAR-10. We use two simple models: small and large, as described in Table 1 and 2.
We choose the architecture of the small model to ensure that AdaGrad-full is applicable. For the large model, AdaGrad-full is too computationally expensive to use.
Each convolution layer has kernel size $3\times 3$, stride 1 and zero-padding 1. Each convolution layer is followed by a hyperbolic tangent activation function. 
The number of parameters of each model is listed in Table 3. As MNIST and CIFAR-10 have different input image size, the fully connected layer in each model has different size of inputs, therefore resulting in different number of parameters.

For each algorithm on each dataset, we tried learning rates $\eta \in \{1,10^{-1},10^{-2},10^{-3},10^{-4}\}$ and report the best performance achieved by the algorithm on the dataset.

The results are shown in Figure \ref{fig:results}, from which we can see AdaGrad-BMA outperforms AdaGrad-diag and achieves similar speed of convergence to AdaGrad-full.
The comparison of runtime of each iteration on MNIST is shown in Table \ref{runtime}. 
Although AdaGrad-BMA has longer runtime than AdaGrad-diag for each iteration, in practice one can update $\mathbf{G}^{-1}$ for every several steps to amortize the cost.

The code for the experiments is included in the supplementary materials.

\begin{table}[h!]
\centering
\begin{scriptsize}
\parbox{.45\linewidth}{
\centering
\caption{Small model}
\vspace{0.1in}
\begin{tabular}{|c|}
\hline
Conv 3x3, 3 \\
Max Pooling 2x2 \\
Conv 3x3, 3 \\
Max Pooling 2x2 \\
Conv 3x3, 3 \\
Max Pooling 2x2 \\
Conv 3x3, 3 \\
Max Pooling 2x2 \\
Fully Connected, 10 \\
Softmax, 10 \\
\hline
\end{tabular}}
\parbox{.45\linewidth}{
\centering
\caption{Large model}
\vspace{0.1in}
\begin{tabular}{|c|}
\hline
Conv 3x3, 32 \\
Conv 3x3, 32 \\
Conv 3x3, 32 \\
Conv 3x3, 32 \\
Max Pooling 2x2 \\
Conv 3x3, 32 \\
Conv 3x3, 32 \\
Conv 3x3, 32 \\
Conv 3x3, 32 \\
Max Pooling 2x2 \\
Conv 3x3, 32 \\
Conv 3x3, 32 \\
Conv 3x3, 32 \\
Conv 3x3, 32 \\
Max Pooling 2x2 \\
Conv 3x3, 32 \\
Conv 3x3, 32 \\
Conv 3x3, 32 \\
Conv 3x3, 32 \\
Max Pooling 2x2 \\
Fully Connected, 10 \\
Softmax, 10 \\
\hline
\end{tabular}}
\end{scriptsize}
\end{table}

\begin{table}[h!]
\centering
\caption{Model parameters}
\vspace{0.1in}
\begin{scriptsize}
\begin{tabular}{|c|c|c|c|}
\hline 
& MNIST &  CIFAR-10 \\
\hline 
Small model    & 322     & 466 \\
Large model    & 139370  & 140906 \\
\hline
\end{tabular}
\end{scriptsize}
\end{table}

\begin{table}[h!]
\centering
\caption{Runtime comparison (ms/iteration)}
\vspace{0.1in}
\begin{scriptsize}
\begin{tabular}{|c|c|c|c|}
\hline 
& Small model &  Large model \\
\hline 
AdaGrad-full    & 16.85     & - \\
AdaGrad-diag    & 5.70    & 7.55 \\
AdaGrad-BMA     & 10.07     & 19.12 \\
\hline
\end{tabular}
\end{scriptsize}
\label{runtime}
\end{table}

\begin{figure}[h!]
\centering
\subfloat[MNIST, small model]{\includegraphics[width=0.45\linewidth]{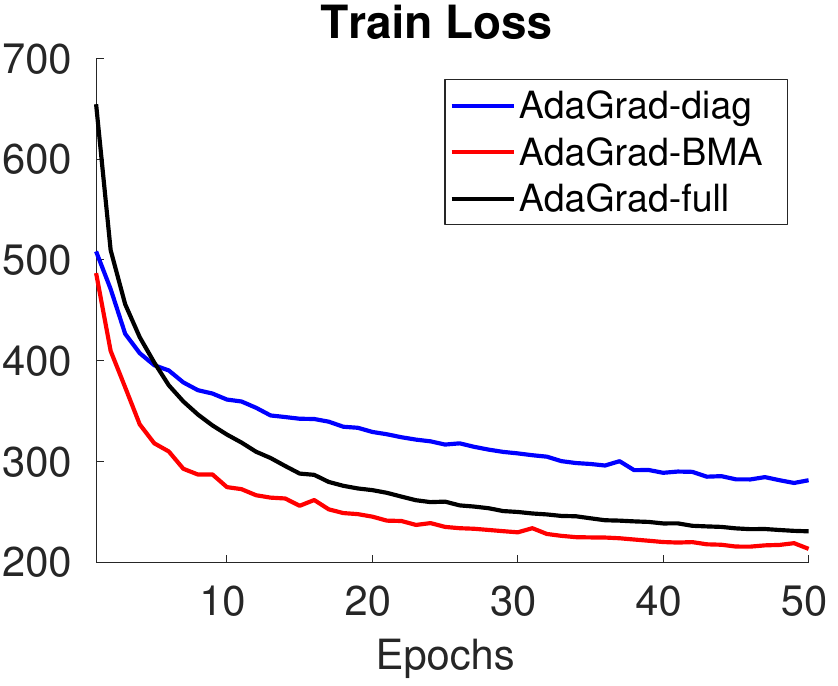}}
\subfloat[MNIST, small model]{\includegraphics[width=0.45\linewidth]{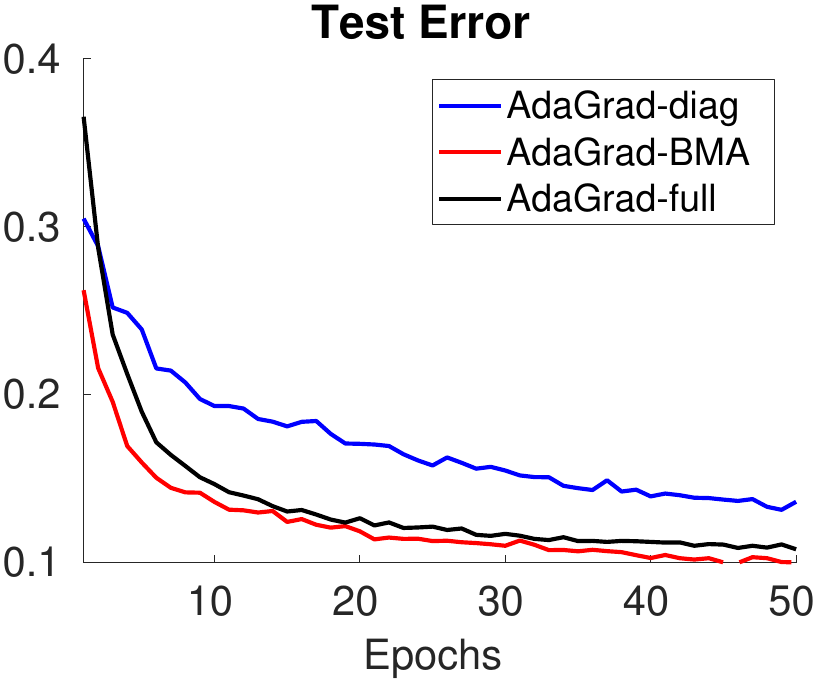}}

\subfloat[MNIST, large model]{\includegraphics[width=0.45\linewidth]{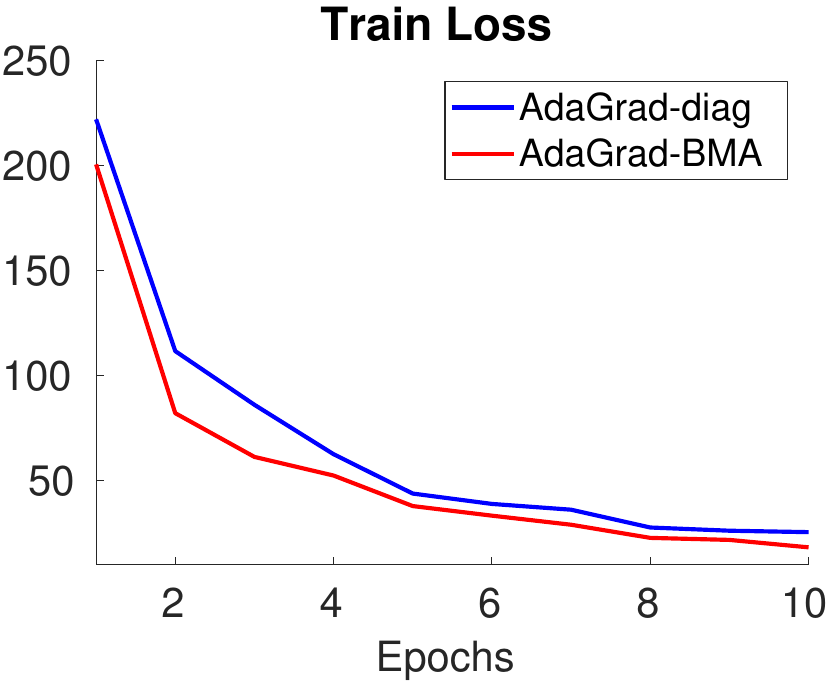}}
\subfloat[MNIST, large model]{\includegraphics[width=0.45\linewidth]{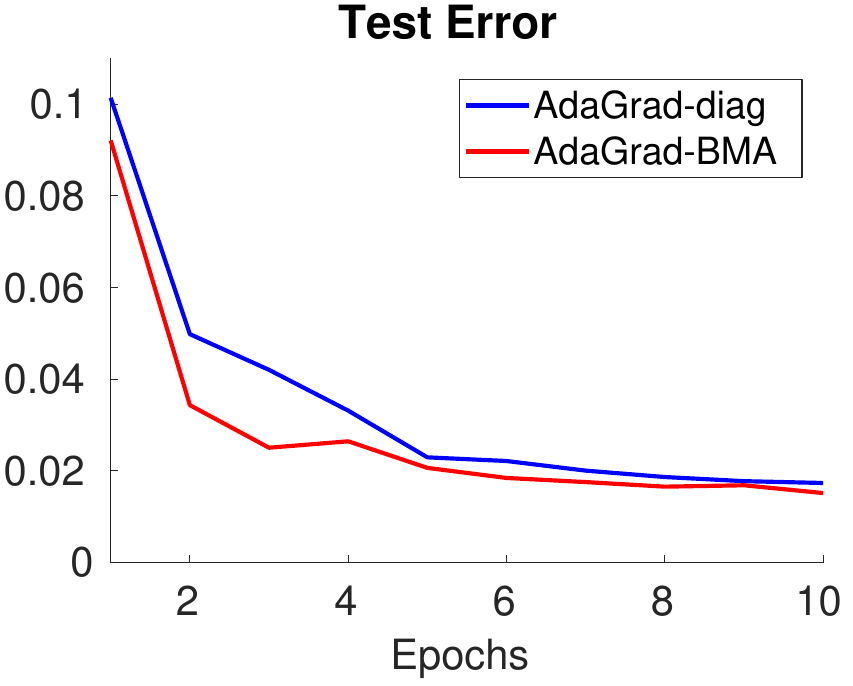}}

\subfloat[CIFAR-10, small model]{\includegraphics[width=0.45\linewidth]{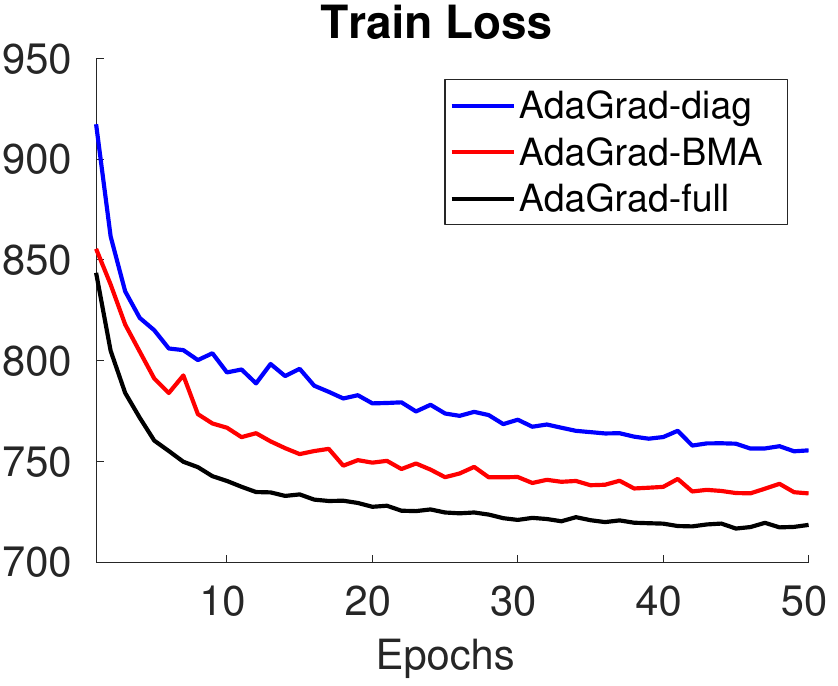}}
\subfloat[CIFAR-10, small model]{\includegraphics[width=0.45\linewidth]{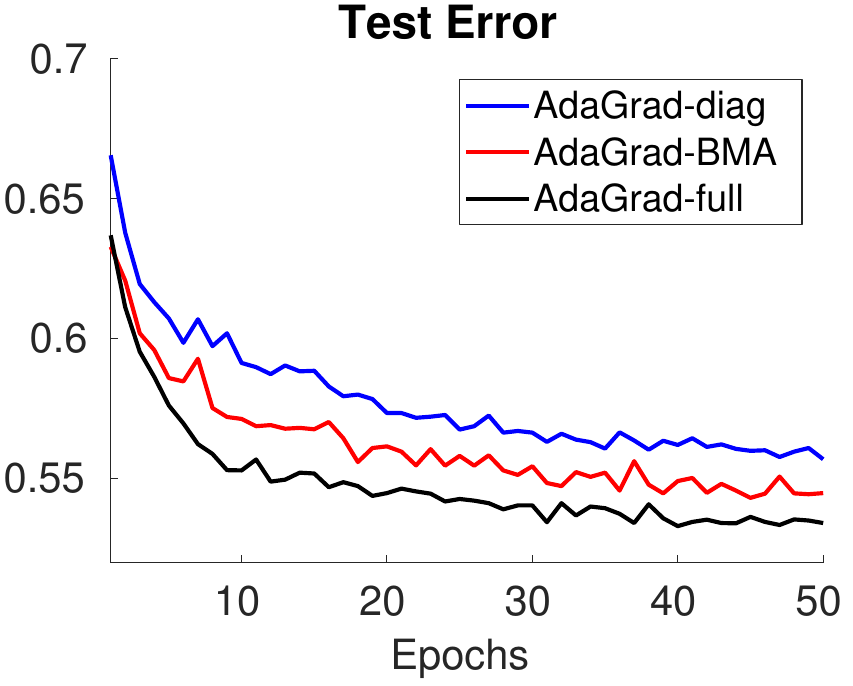}}

\subfloat[CIFAR-10, large model]{\includegraphics[width=0.45\linewidth]{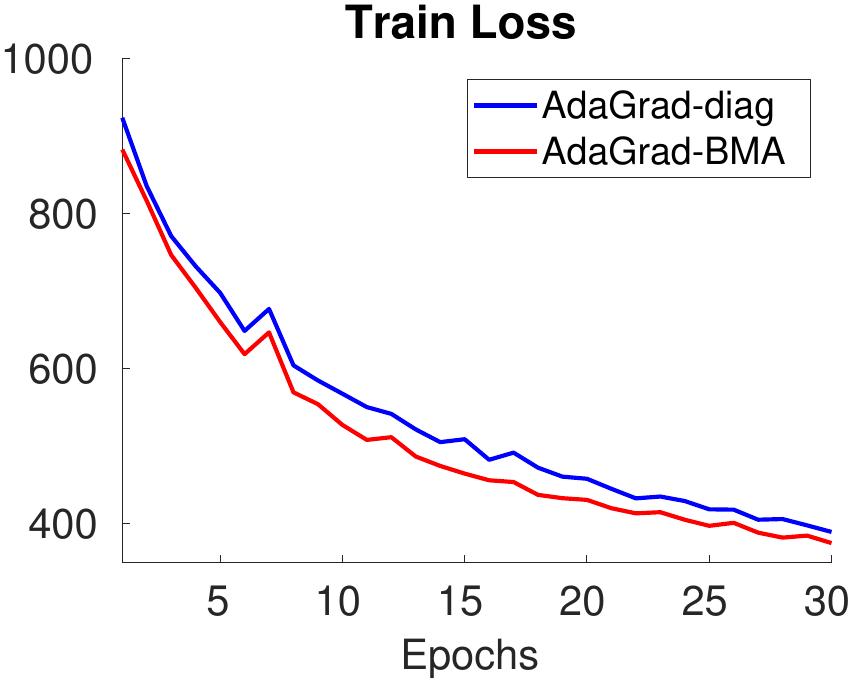}}
\subfloat[CIFAR-10, large model]{\includegraphics[width=0.45\linewidth]{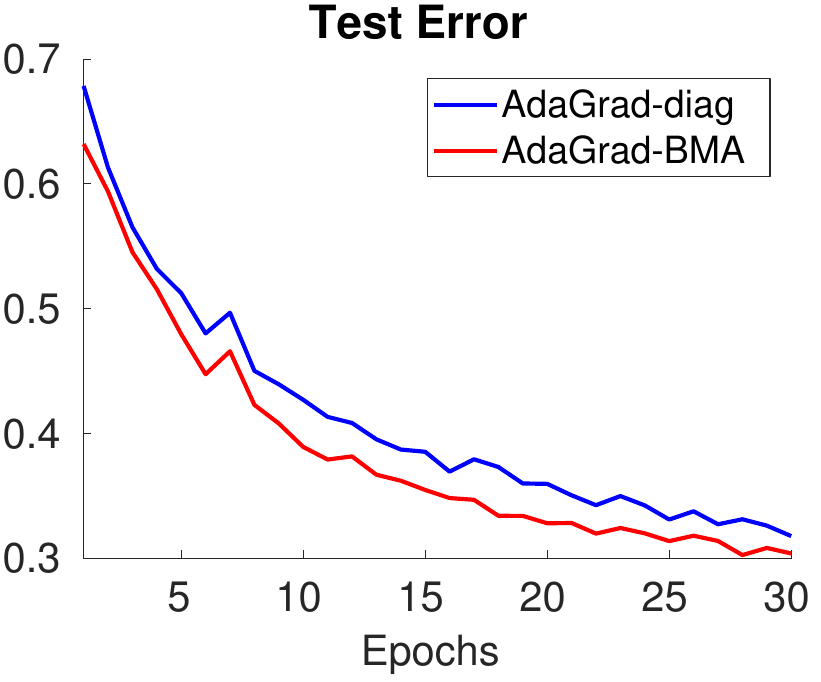}}
\caption{Performance of AdaGrad and its approximations on two standard datasets.}
\label{fig:results}
\end{figure}

\clearpage

\section{Discussions}

In this paper, we propose a new matrix approximation method which allows efficient storage and computation of matrix inverse and inverse square root.
The method is applied to AdaGrad and achieves promising results.

In the numerical linear algebra literature, there are two work relevant but different from ours.
\cite{chow1997approximate} proposed an approximate inverse technique which generates sparse solutions for block partitioned matrices. However, in our method the exact inverse of the block mean approximation matrix can be computed, as proved in Theorem 1 and 2. Our method does not assume sparse solutions either. \cite{guillaume2003block} proposed  an approximate inverse technique which incorporates block constant structure in the preconditioning matrix. This is different from ours as in our method the block constant structure is incorporated the approximated matrix and its inverse and the analytic solution of the inverse is explicitly given.

Second order optimization methods for training neural networks have many theoretical appeals, as discussed in this paper and reviewed in \cite{shepherd2012second,martens2016second}.  We hope our method makes one more step towards their practical implementation.

\newpage

\section*{Appendix}
\setcounter{prop}{0}
\setcounter{theorem}{0}

To show how Newton method is invariant of affine re-parameterization, let $\nabla^2$ be the Hessian operator, $\bm{\xi}=\mathbf{A}\bm{\theta}$ and $g(\bm{\theta}) = f(\mathbf{A}\bm{\theta})$ where $\mathbf{A}$ is an invertible square matrix. 
\begin{align}
\bm{\theta}_{t+1} &= \bm{\theta}_{t} - \eta (\nabla^2_{\bm{\theta}}g(\bm{\theta}))^{-1}\nabla_{\bm{\theta}}g(\bm{\theta}_{t}) \\
&=\bm{\theta}_{t} - \eta (\mathbf{A}^\intercal\nabla^2_{\bm{\xi}}f(\mathbf{A}\bm{\theta}_{t})\mathbf{A})^{-1}\mathbf{A}^\intercal\nabla_{\bm{\xi}}f(\mathbf{A}\bm{\theta}_{t}) \\
&=\bm{\theta}_{t} - \eta \mathbf{A}^{-1}(\nabla^2_{\bm{\xi}}f(\mathbf{A}\bm{\theta}_{t}))^{-1}\nabla_{\bm{\xi}}f(\mathbf{A}\bm{\theta}_{t}) 
\end{align}
we obtain
\begin{align}
\mathbf{A}\bm{\theta}_{t+1} &= \mathbf{A}\bm{\theta}_{t} - \eta (\nabla_{\bm{\xi}}^2f(\mathbf{A}\bm{\theta}_{t}))^{-1}\nabla_{\bm{\xi}}f(\mathbf{A}\bm{\theta}_{t}) 
\end{align}
which is equivalent to 
\begin{align}
\bm{\xi}_{t+1} &= \bm{\xi}_{t} - \eta (\nabla_{\bm{\xi}}^2f(\bm{\xi}_{t}))^{-1}\nabla_{\bm{\xi}}f(\bm{\xi}_{t}).
\end{align}

\begin{prop}
The optimal block mean approximation of $\mathbf{M}$ with the partition vector $\mathbf{s}$ according to the Frobenius norm 
\begin{align}
\min_{\bar{\mathbf{\Lambda}},\bar{\mathbf{B}}} \| \bar{\mathbf{\Lambda}}+\bar{\mathbf{B}} - \mathbf{M}  \|_F^2\;
\end{align}
is given by
\begin{align}
b_{ij} &= 
\begin{cases}
0,  &i = j, s_i = 1, \\
\frac{\sum_{mn} \mathbf{M}^{ii}_{mn} - \sum_{m} \mathbf{M}^{ii}_{mm}}{s_{i}(s_{i}-1)},  &i = j, s_i \neq 1, \\
\frac{\sum_{mn}\mathbf{M}^{ij}_{mn}}{s_{i}s_{j}}, &i \neq j, 
\end{cases} \label{b_ij} \\
\lambda_{i} &= \frac{1}{s_{i}}\sum_m \mathbf{M}^{ii}_{mm} - b_{ii}\;. \label{lambda_i}
\end{align}
\end{prop}
\begin{proof}
For  $i\neq j$, $\bar{\mathbf{\Lambda}}^{ij} = 0$ by construction. Hence, $b_{ij}=\frac{\sum_{mn}\mathbf{M}^{ij}_{mn}}{s_{i}s_{j}}$ is the minimum solution for  $\| \bar{\mathbf{B}}^{ij} - \mathbf{M}^{ij}  \|_F$. For $i=j$, since $b_{ii} = \frac{\sum_{mn} \mathbf{M}^{ii}_{mn} - \sum_{m} \mathbf{M}^{ii}_{mm}}{s_{i}(s_{i}-1)}$, the off-diagonal elements of $\mathbf{M}^{ii}$ are minimized under the Frobenius norm. And since $\lambda_{i} + b_{ii} = \frac{1}{s_{i}}\sum_m \mathbf{M}^{ii}_{mm}$, the diagonal elements of $\mathbf{M}^{ii}$ are also minimized.
\end{proof}

\begin{theorem}
For invertible matrix $\bar{\mathbf{\Lambda}}+\bar{\mathbf{B}}$, where $\bar{\mathbf{\Lambda}}$ is the diagonal expansion matrix of $\mathbf{\Lambda}$ and $\bar{\mathbf{B}}$ is the full expansion matrix of $\mathbf{B}$, both of which have the same partition vector $\mathbf{s}$, 
\begin{align}
(\bar{\mathbf{\Lambda}}+\bar{\mathbf{B}})^{-1} = \bar{\mathbf{\Lambda}}^{-1} + \bar{\mathbf{D}}
\end{align}
 where $\bar{\mathbf{D}}$ is the full expansion matrix with partition vector $\mathbf{s}$ of 
\begin{align}
\mathbf{D} =  (\mathbf{\Lambda} \mathbf{S} + \mathbf{S}\mathbf{B}\mathbf{S} )^{-1} - (\mathbf{\Lambda}\mathbf{S})^{-1} \label{D_theorem1}
\end{align}
where $\mathbf{S}= \text{\normalfont diag}(\mathbf{s})$.
\end{theorem}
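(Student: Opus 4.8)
The plan is to realize both expansion operations through a single $d\times L$ \emph{selection matrix} $\mathbf{E}$, defined by $\mathbf{E}_{ki}=1$ if the $k$-th coordinate lies in block $i$ and $0$ otherwise (here $d=\sum_i s_i$). First I would record the three structural identities that make everything collapse: the full expansion is a congruence, $\bar{\mathbf{B}}=\mathbf{E}\mathbf{B}\mathbf{E}^\intercal$ and likewise $\bar{\mathbf{D}}=\mathbf{E}\mathbf{D}\mathbf{E}^\intercal$; the Gram matrix counts block sizes, $\mathbf{E}^\intercal\mathbf{E}=\mathbf{S}$; and the diagonal expansion intertwines with $\mathbf{E}$ in the sense $\bar{\mathbf{\Lambda}}\mathbf{E}=\mathbf{E}\mathbf{\Lambda}$, hence also $\bar{\mathbf{\Lambda}}^{-1}\mathbf{E}=\mathbf{E}\mathbf{\Lambda}^{-1}$ and, by transposing and using symmetry of the diagonal factors, $\mathbf{E}^\intercal\bar{\mathbf{\Lambda}}^{-1}=\mathbf{\Lambda}^{-1}\mathbf{E}^\intercal$. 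Each of these is immediate from the entrywise definition of the expansion matrices.

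Rather than invoke a Woodbury-type formula, which would be awkward here since $\mathbf{B}$ need not be invertible, I would verify the claimed inverse directly by checking $(\bar{\mathbf{\Lambda}}+\bar{\mathbf{B}})(\bar{\mathbf{\Lambda}}^{-1}+\bar{\mathbf{D}})=\mathbf{I}$; the matrix being square, a one-sided inverse is automatically two-sided. Expanding the product gives $\mathbf{I}+\bar{\mathbf{\Lambda}}\bar{\mathbf{D}}+\bar{\mathbf{B}}\bar{\mathbf{\Lambda}}^{-1}+\bar{\mathbf{B}}\bar{\mathbf{D}}$, so it remains to show the last three terms cancel. Using the structural identities, each of these terms carries an outer $\mathbf{E}(\cdot)\mathbf{E}^\intercal$: concretely $\bar{\mathbf{\Lambda}}\bar{\mathbf{D}}=\mathbf{E}(\mathbf{\Lambda}\mathbf{D})\mathbf{E}^\intercal$, then $\bar{\mathbf{B}}\bar{\mathbf{\Lambda}}^{-1}=\mathbf{E}(\mathbf{B}\mathbf{\Lambda}^{-1})\mathbf{E}^\intercal$, and $\bar{\mathbf{B}}\bar{\mathbf{D}}=\mathbf{E}(\mathbf{B}\mathbf{S}\mathbf{D})\mathbf{E}^\intercal$, the last using $\mathbf{E}^\intercal\mathbf{E}=\mathbf{S}$. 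Thus the three terms sum to $\mathbf{E}\big[\mathbf{\Lambda}\mathbf{D}+\mathbf{B}\mathbf{\Lambda}^{-1}+\mathbf{B}\mathbf{S}\mathbf{D}\big]\mathbf{E}^\intercal$, and the whole $d\times d$ problem reduces to the $L\times L$ identity $(\mathbf{\Lambda}+\mathbf{B}\mathbf{S})\mathbf{D}=-\mathbf{B}\mathbf{\Lambda}^{-1}$.

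This last step is the only genuine computation, and it is short. Substituting $\mathbf{D}=(\mathbf{\Lambda}\mathbf{S}+\mathbf{S}\mathbf{B}\mathbf{S})^{-1}-(\mathbf{\Lambda}\mathbf{S})^{-1}$ and factoring $\mathbf{\Lambda}\mathbf{S}+\mathbf{S}\mathbf{B}\mathbf{S}=\mathbf{S}(\mathbf{\Lambda}+\mathbf{B}\mathbf{S})$, which is legitimate because $\mathbf{\Lambda}$ and $\mathbf{S}$ are diagonal and hence commute, rewrites $\mathbf{D}=(\mathbf{\Lambda}+\mathbf{B}\mathbf{S})^{-1}\mathbf{S}^{-1}-\mathbf{\Lambda}^{-1}\mathbf{S}^{-1}$. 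Left-multiplying by $(\mathbf{\Lambda}+\mathbf{B}\mathbf{S})$ telescopes the first term to $\mathbf{S}^{-1}$ and leaves $-(\mathbf{I}+\mathbf{B}\mathbf{S}\mathbf{\Lambda}^{-1})\mathbf{S}^{-1}$ from the second; the $\mathbf{S}^{-1}$ pieces cancel and the diagonal commutation $\mathbf{S}\mathbf{\Lambda}^{-1}\mathbf{S}^{-1}=\mathbf{\Lambda}^{-1}$ delivers exactly $-\mathbf{B}\mathbf{\Lambda}^{-1}$. The main obstacle here is conceptual rather than technical: spotting that both expansion operations are the single congruence $\mathbf{E}(\cdot)\mathbf{E}^\intercal$ and that $\bar{\mathbf{\Lambda}}$ intertwines with $\mathbf{E}$, after which the high-dimensional claim is squeezed down to a one-line manipulation of $L\times L$ factors. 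I would also note in passing that positive definiteness guarantees the inverses $\bar{\mathbf{\Lambda}}^{-1}$, $(\mathbf{\Lambda}\mathbf{S})^{-1}$ and $(\mathbf{\Lambda}\mathbf{S}+\mathbf{S}\mathbf{B}\mathbf{S})^{-1}$ all exist, so that $\mathbf{D}$ and the stated formula are well defined.
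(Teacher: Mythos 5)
Your proof is correct, and it reaches the paper's conclusion by a genuinely different route in all three working parts. Both arguments ultimately hinge on the same $L\times L$ identity $\mathbf{\Lambda}\mathbf{D}+\mathbf{B}\mathbf{S}\mathbf{D}+\mathbf{B}\mathbf{\Lambda}^{-1}=\mathbf{0}$, but the executions diverge: (i) the paper derives it by applying the Kailath variant of the Woodbury identity to $(\mathbf{\Lambda}\mathbf{S}+\mathbf{S}\mathbf{B}\mathbf{S})^{-1}$, whereas you obtain it from the elementary factorization $\mathbf{\Lambda}\mathbf{S}+\mathbf{S}\mathbf{B}\mathbf{S}=\mathbf{S}(\mathbf{\Lambda}+\mathbf{B}\mathbf{S})$ plus a one-line telescoping computation --- more elementary, though your stated reason for avoiding Woodbury is off the mark, since the Kailath variant the paper uses does not require $\mathbf{B}$ to be invertible; (ii) the paper lifts the small identity to the expanded matrices with a bare ``we have equivalently,'' leaving unjustified exactly the nontrivial product rules (in particular why an extra $\mathbf{S}$ appears in $\bar{\mathbf{B}}\bar{\mathbf{D}}$), whereas your selection-matrix identities $\bar{\mathbf{B}}=\mathbf{E}\mathbf{B}\mathbf{E}^\intercal$, $\mathbf{E}^\intercal\mathbf{E}=\mathbf{S}$, $\bar{\mathbf{\Lambda}}\mathbf{E}=\mathbf{E}\mathbf{\Lambda}$ turn that step into a mechanical computation --- this is the main thing your approach buys; (iii) the paper proves $(\bar{\mathbf{\Lambda}}+\bar{\mathbf{B}})(\bar{\mathbf{\Lambda}}^{-1}+\bar{\mathbf{D}})=\mathbf{I}$ and the reversed product by two separate Woodbury computations, while you check one side and invoke the standard fact that a one-sided inverse of a square matrix is two-sided, halving the work. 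Your framework also yields, essentially for free, something the paper never addresses: since $\mathbf{E}$ has full column rank and $\mathbf{E}^\intercal(\bar{\mathbf{\Lambda}}+\bar{\mathbf{B}})\mathbf{E}=\mathbf{\Lambda}\mathbf{S}+\mathbf{S}\mathbf{B}\mathbf{S}$, positive definiteness of the expanded matrix makes $(\mathbf{\Lambda}\mathbf{S}+\mathbf{S}\mathbf{B}\mathbf{S})^{-1}$, and hence $\mathbf{D}$, well defined; spelling out this congruence would strengthen your closing remark, which as stated is slightly too strong ($\bar{\mathbf{\Lambda}}^{-1}$ can fail to exist for a singleton block with $\lambda_i=0$ --- an implicit assumption of invertible $\mathbf{\Lambda}$ that the paper shares).
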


\begin{proof}
First we prove $(\bar{\mathbf{\Lambda}}+\bar{\mathbf{B}})(\bar{\mathbf{\Lambda}}^{-1} + \bar{\mathbf{D}}) = \mathbf{I}$.
\begin{align}
\mathbf{D} &= (\mathbf{\Lambda}\mathbf{S} + \mathbf{S}\mathbf{B}\mathbf{S})^{-1} - (\mathbf{\Lambda}\mathbf{S})^{-1} \\
&= (\mathbf{\Lambda}\mathbf{S})^{-1} \nonumber \\
&- (\mathbf{\Lambda}\mathbf{S})^{-1} \mathbf{S}(\mathbf{I}+\mathbf{BS}(\mathbf{\Lambda}\mathbf{S})^{-1}\mathbf{S})^{-1}\mathbf{BS}(\mathbf{\Lambda}\mathbf{S})^{-1} \nonumber \\ 
&- (\mathbf{\Lambda}\mathbf{S})^{-1} \label{woodbury} \\
&= - \mathbf{\Lambda}^{-1}(\mathbf{I}+\mathbf{B}\mathbf{\Lambda}^{-1}\mathbf{S})^{-1}\mathbf{B}\mathbf{\Lambda}^{-1} \label{commutativity} \\
&= -(\mathbf{\Lambda}+\mathbf{B}\mathbf{S})^{-1}\mathbf{B}\mathbf{\Lambda}^{-1} \label{D_last}
\end{align}
(\ref{woodbury}) follows from the Kailath variant of Woodbury identity. (\ref{commutativity}) follows from $\mathbf{\Lambda}\mathbf{S} = \mathbf{S}\mathbf{\Lambda}$ since $\mathbf{S}$ and $\mathbf{\Lambda}$ are both diagonal. Multiply both sides of (\ref{D_last}) by $\mathbf{\Lambda}+\mathbf{B}\mathbf{S}$, after some manipulation, we have
\begin{align}
\mathbf{B}\mathbf{\Lambda}^{-1} + \mathbf{\Lambda}\mathbf{D} + \mathbf{B}\mathbf{S}\mathbf{D} = 0
\end{align}
Since $\bar{\mathbf{\Lambda}}$, $\bar{\mathbf{B}}$ and $\bar{\mathbf{D}}$ are the expansion matrices with partition vector $\mathbf{s}$ of $\mathbf{\Lambda}$, $\mathbf{B}$ and $\mathbf{D}$, respectively, we have equivalently
\begin{align}
\bar{\mathbf{B}}\bar{\mathbf{\Lambda}}^{-1} + \bar{\mathbf{\Lambda}}\bar{\mathbf{D}} + \bar{\mathbf{B}}\bar{\mathbf{D}} &= 0  \\
\mathbf{I} + \bar{\mathbf{B}}\bar{\mathbf{\Lambda}}^{-1} + \bar{\mathbf{\Lambda}}\bar{\mathbf{D}} + \bar{\mathbf{B}}\bar{\mathbf{D}} &= \mathbf{I}  \\
(\bar{\mathbf{\Lambda}}+\bar{\mathbf{B}})(\bar{\mathbf{\Lambda}}^{-1} + \bar{\mathbf{D}}) &= \mathbf{I}
\end{align}

Second, we prove $(\bar{\mathbf{\Lambda}}^{-1} + \bar{\mathbf{D}})(\bar{\mathbf{\Lambda}}+\bar{\mathbf{B}}) = \mathbf{I}$.
\begin{align}
\mathbf{D} &= (\mathbf{\Lambda}\mathbf{S} + \mathbf{S}\mathbf{B}\mathbf{S})^{-1} - (\mathbf{\Lambda}\mathbf{S})^{-1} \\
&= (\mathbf{\Lambda}\mathbf{S})^{-1}  \\
&- (\mathbf{\Lambda}\mathbf{S})^{-1} \mathbf{SB}(\mathbf{I}+\mathbf{S}(\mathbf{\Lambda}\mathbf{S})^{-1}\mathbf{SB})^{-1}\mathbf{S}(\mathbf{\Lambda}\mathbf{S})^{-1} \\ 
&- (\mathbf{\Lambda}\mathbf{S})^{-1} \\
&= - \mathbf{\Lambda}^{-1}\mathbf{B}(\mathbf{I}+\mathbf{\Lambda}^{-1}\mathbf{S}\mathbf{B})^{-1}\mathbf{\Lambda}^{-1} \\
&= - \mathbf{\Lambda}^{-1}\mathbf{B}(\mathbf{\Lambda}+\mathbf{S}\mathbf{B})^{-1}
\end{align}
Therefore
\begin{align}
\mathbf{\Lambda}^{-1}\mathbf{B} + \mathbf{D}\mathbf{\Lambda} + \mathbf{D}\mathbf{S}\mathbf{B} = 0  \\
\bar{\mathbf{\Lambda}}^{-1}\bar{\mathbf{B}} + \bar{\mathbf{D}}\bar{\mathbf{\Lambda}} + \bar{\mathbf{D}}\bar{\mathbf{B}} = 0 \\
\mathbf{I} + \bar{\mathbf{\Lambda}}^{-1}\bar{\mathbf{B}} + \bar{\mathbf{D}}\bar{\mathbf{\Lambda}} + \bar{\mathbf{D}}\bar{\mathbf{B}} = \mathbf{I} \\
(\bar{\mathbf{\Lambda}}^{-1} + \bar{\mathbf{D}})(\bar{\mathbf{\Lambda}}+\bar{\mathbf{B}}) = \mathbf{I}
\end{align}
\end{proof}

\begin{lemma}
For invertible matrix $\bar{\mathbf{\Lambda}}+\bar{\mathbf{B}}$, where $\bar{\mathbf{\Lambda}}$ is the diagonal expansion matrix of $\mathbf{\Lambda}$ and $\bar{\mathbf{B}}$ is the full expansion matrix of $\mathbf{B}$, both of which have the same partition vector $\mathbf{s}$,
\begin{align}
(\bar{\mathbf{\Lambda}}+\bar{\mathbf{B}})^{\frac{1}{2}} = \bar{\mathbf{\Lambda}}^{\frac{1}{2}} + \bar{\mathbf{D}}
\end{align}
 where $\bar{\mathbf{D}}$ is the full expansion matrix with partition vector $\mathbf{s}$ of 
\begin{align}
\mathbf{D} = \mathbf{S}^{-\frac{1}{2}}\left[(\mathbf{\Lambda} + \mathbf{S}^{\frac{1}{2}}\mathbf{B}\mathbf{S}^{\frac{1}{2}} )^{\frac{1}{2}} - \mathbf{\Lambda}^{\frac{1}{2}}\right]\mathbf{S}^{-\frac{1}{2}}  \label{D_lemma1}
\end{align}
where $\mathbf{S}= \text{\normalfont diag}(\mathbf{s})$.
\end{lemma}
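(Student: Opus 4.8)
\emph{Proof plan.} The plan is to verify the claimed identity by squaring the proposed right-hand side and then appealing to uniqueness of the positive definite square root, exactly mirroring the verification strategy used for Theorem~1. The essential algebraic tool is the multiplication rule for expansion matrices. Writing $\mathbf{P}\in\mathbb{R}^{d\times L}$ for the block-membership indicator (so that the full expansion of any $\mathbf{X}$ is $\mathbf{P}\mathbf{X}\mathbf{P}^\intercal$ and $\mathbf{P}^\intercal\mathbf{P}=\mathbf{S}$), one sees immediately that the product of two full expansions $\bar{\mathbf{X}}\bar{\mathbf{Y}}$ is the full expansion of $\mathbf{X}\mathbf{S}\mathbf{Y}$. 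Since $\bar{\mathbf{\Lambda}}^{\frac{1}{2}}$ is a block-diagonal scaling by $\lambda_i^{\frac{1}{2}}$ on block row $i$, the mixed products satisfy $\bar{\mathbf{\Lambda}}^{\frac{1}{2}}\bar{\mathbf{D}}=\overline{\mathbf{\Lambda}^{\frac{1}{2}}\mathbf{D}}$ and $\bar{\mathbf{D}}\bar{\mathbf{\Lambda}}^{\frac{1}{2}}=\overline{\mathbf{D}\mathbf{\Lambda}^{\frac{1}{2}}}$, while $\bar{\mathbf{\Lambda}}^{\frac{1}{2}}\bar{\mathbf{\Lambda}}^{\frac{1}{2}}=\bar{\mathbf{\Lambda}}$. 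I would record these three identities first, since they (together with the linearity of the expansion) drive everything else.

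Using these rules and linearity, I would expand
\begin{align}
(\bar{\mathbf{\Lambda}}^{\frac{1}{2}}+\bar{\mathbf{D}})^2 = \bar{\mathbf{\Lambda}} + \overline{\mathbf{\Lambda}^{\frac{1}{2}}\mathbf{D}+\mathbf{D}\mathbf{\Lambda}^{\frac{1}{2}}+\mathbf{D}\mathbf{S}\mathbf{D}},
\end{align}
so the claim reduces to the single scalar-block identity $\mathbf{\Lambda}^{\frac{1}{2}}\mathbf{D}+\mathbf{D}\mathbf{\Lambda}^{\frac{1}{2}}+\mathbf{D}\mathbf{S}\mathbf{D}=\mathbf{B}$. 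Substituting the explicit $\mathbf{D}$ from~(\ref{D_lemma1}) and setting $\mathbf{C}=(\mathbf{\Lambda}+\mathbf{S}^{\frac{1}{2}}\mathbf{B}\mathbf{S}^{\frac{1}{2}})^{\frac{1}{2}}$ and $\mathbf{E}=\mathbf{C}-\mathbf{\Lambda}^{\frac{1}{2}}$, I would use that $\mathbf{S}$ and $\mathbf{\Lambda}$ are diagonal (hence commute) together with $\mathbf{S}^{-\frac{1}{2}}\mathbf{S}\mathbf{S}^{-\frac{1}{2}}=\mathbf{I}$ to collapse the left side to $\mathbf{S}^{-\frac{1}{2}}(\mathbf{\Lambda}^{\frac{1}{2}}\mathbf{E}+\mathbf{E}\mathbf{\Lambda}^{\frac{1}{2}}+\mathbf{E}^2)\mathbf{S}^{-\frac{1}{2}}$. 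The bracket is just $\mathbf{C}^2-\mathbf{\Lambda}=\mathbf{S}^{\frac{1}{2}}\mathbf{B}\mathbf{S}^{\frac{1}{2}}$ (completing the square), so the whole expression equals $\mathbf{B}$. This step is routine once the product rules are in hand.

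The main obstacle is not this computation but justifying the square-root \emph{notation}: the display only shows that $\bar{\mathbf{\Lambda}}^{\frac{1}{2}}+\bar{\mathbf{D}}$ is \emph{a} symmetric square root of $\bar{\mathbf{\Lambda}}+\bar{\mathbf{B}}$, whereas $(\cdot)^{\frac{1}{2}}$ denotes the unique positive definite one. I would settle this in two steps. First, $\mathbf{C}$ must be a genuine real principal square root, which needs $\mathbf{\Lambda}+\mathbf{S}^{\frac{1}{2}}\mathbf{B}\mathbf{S}^{\frac{1}{2}}$ to be positive definite; this follows by restricting the positive definite form of $\bar{\mathbf{\Lambda}}+\bar{\mathbf{B}}$ to block-constant vectors $\mathbf{x}=\mathbf{P}\mathbf{a}$, which yields $\mathbf{x}^\intercal(\bar{\mathbf{\Lambda}}+\bar{\mathbf{B}})\mathbf{x}=\mathbf{y}^\intercal(\mathbf{\Lambda}+\mathbf{S}^{\frac{1}{2}}\mathbf{B}\mathbf{S}^{\frac{1}{2}})\mathbf{y}$ with $\mathbf{y}=\mathbf{S}^{\frac{1}{2}}\mathbf{a}$. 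Second, I would show $\bar{\mathbf{\Lambda}}^{\frac{1}{2}}+\bar{\mathbf{D}}$ is itself positive definite by splitting $\mathbb{R}^d$ into the orthogonal, invariant subspaces of block-constant and block-deviation (zero block-sum) vectors: on the deviation subspace $\bar{\mathbf{D}}$ vanishes and the operator acts as $\bar{\mathbf{\Lambda}}^{\frac{1}{2}}\succ 0$, while on the block-constant subspace the induced quadratic form reduces to $\mathbf{a}\mapsto(\mathbf{S}^{\frac{1}{2}}\mathbf{a})^\intercal\mathbf{C}(\mathbf{S}^{\frac{1}{2}}\mathbf{a})$ via the simplification $\mathbf{\Lambda}^{\frac{1}{2}}+\mathbf{D}\mathbf{S}=\mathbf{S}^{-\frac{1}{2}}\mathbf{C}\mathbf{S}^{\frac{1}{2}}$, which is positive since $\mathbf{C}\succ 0$. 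As the two subspaces are orthogonal and invariant and the operator is symmetric, the cross terms vanish and positivity on each piece gives positivity overall. A symmetric positive definite matrix whose square is $\bar{\mathbf{\Lambda}}+\bar{\mathbf{B}}$ must be its principal square root, which finishes the proof. (Symmetry of $\mathbf{B}$, hence of $\mathbf{C}$ and $\mathbf{D}$, is inherited from the symmetry implicit in positive definiteness of $\bar{\mathbf{\Lambda}}+\bar{\mathbf{B}}$.)
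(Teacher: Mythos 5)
Your proof is correct, and its computational core is the same as the paper's: both arguments come down to the $L\times L$ identity $\mathbf{\Lambda}^{\frac{1}{2}}\mathbf{D}+\mathbf{D}\mathbf{\Lambda}^{\frac{1}{2}}+\mathbf{D}\mathbf{S}\mathbf{D}=\mathbf{B}$, lifted to $(\bar{\mathbf{\Lambda}}^{\frac{1}{2}}+\bar{\mathbf{D}})^2=\bar{\mathbf{\Lambda}}+\bar{\mathbf{B}}$ through the multiplication rules for expansion matrices. (The paper reaches that identity by rearranging the definition of $\mathbf{D}$ and squaring; you verify it by substituting $\mathbf{D}=\mathbf{S}^{-\frac{1}{2}}\mathbf{E}\mathbf{S}^{-\frac{1}{2}}$ and completing the square --- the same algebra run in the opposite direction.) Where you genuinely differ is in making rigorous two steps the paper glosses over. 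First, you justify the lifting step via the indicator matrix $\mathbf{P}$ and the rule $\bar{\mathbf{X}}\bar{\mathbf{Y}}=\overline{\mathbf{X}\mathbf{S}\mathbf{Y}}$, whereas the paper simply asserts ``we have equivalently'' when passing from the small matrices to their expansions (note that this step silently converts $\mathbf{D}\mathbf{S}\mathbf{D}$ into $\bar{\mathbf{D}}\bar{\mathbf{D}}$, which is exactly your product rule). Second, and more substantively, the paper's final inference from $(\bar{\mathbf{\Lambda}}^{\frac{1}{2}}+\bar{\mathbf{D}})^2=\bar{\mathbf{\Lambda}}+\bar{\mathbf{B}}$ to $\bar{\mathbf{\Lambda}}^{\frac{1}{2}}+\bar{\mathbf{D}}=(\bar{\mathbf{\Lambda}}+\bar{\mathbf{B}})^{\frac{1}{2}}$ is a real gap: a positive definite matrix has many symmetric square roots, and only a positive semidefinite one is the principal root that the notation denotes. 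Your two additions --- showing $\mathbf{\Lambda}+\mathbf{S}^{\frac{1}{2}}\mathbf{B}\mathbf{S}^{\frac{1}{2}}\succ 0$ by restricting the quadratic form of $\bar{\mathbf{\Lambda}}+\bar{\mathbf{B}}$ to block-constant vectors (so that $\mathbf{C}$, hence $\mathbf{D}$, exists as a real matrix), and showing $\bar{\mathbf{\Lambda}}^{\frac{1}{2}}+\bar{\mathbf{D}}\succ 0$ by splitting $\mathbb{R}^d$ into the orthogonal invariant subspaces of block-constant and zero-block-sum vectors --- close that gap and also supply the well-definedness facts that the proof of Theorem 2 relies on downstream. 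The paper's version buys brevity; yours buys a complete justification of the square-root notation, at the modest cost of introducing the $\mathbf{P}$-calculus and the subspace decomposition.
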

\begin{proof}
\begin{align}
\mathbf{D} &= \mathbf{S}^{-\frac{1}{2}}\left[(\mathbf{\Lambda} + \mathbf{S}^{\frac{1}{2}}\mathbf{B}\mathbf{S}^{\frac{1}{2}} )^{\frac{1}{2}} - \mathbf{\Lambda}^{\frac{1}{2}}\right]\mathbf{S}^{-\frac{1}{2}}  
\end{align}
Left and right multiply both side by $\mathbf{S}^{\frac{1}{2}}$,
\begin{align}
\mathbf{S}^{\frac{1}{2}} \mathbf{D} \mathbf{S}^{\frac{1}{2}} &= (\mathbf{\Lambda} + \mathbf{S}^{\frac{1}{2}}\mathbf{B}\mathbf{S}^{\frac{1}{2}} )^{\frac{1}{2}} - \mathbf{\Lambda}^{\frac{1}{2}} \\
\mathbf{S}^{\frac{1}{2}} \mathbf{D} \mathbf{S}^{\frac{1}{2}}+ \mathbf{\Lambda}^{\frac{1}{2}} &= (\mathbf{\Lambda} + \mathbf{S}^{\frac{1}{2}}\mathbf{B}\mathbf{S}^{\frac{1}{2}} )^{\frac{1}{2}} \\
(\mathbf{S}^{\frac{1}{2}} \mathbf{D} \mathbf{S}^{\frac{1}{2}}+ \mathbf{\Lambda}^{\frac{1}{2}} )^{2} &= (\mathbf{\Lambda} + \mathbf{S}^{\frac{1}{2}}\mathbf{B}\mathbf{S}^{\frac{1}{2}} ) 
\end{align}
Expanding the square,
\begin{align}
&\mathbf{S}^{\frac{1}{2}} \mathbf{D} \mathbf{S} \mathbf{D} \mathbf{S}^{\frac{1}{2}} + \mathbf{S}^{\frac{1}{2}}\mathbf{D}\mathbf{S}^{\frac{1}{2}}\mathbf{\Lambda}^{\frac{1}{2}}
+ \mathbf{\Lambda}^{\frac{1}{2}}\mathbf{S}^{\frac{1}{2}}\mathbf{D}\mathbf{S}^{\frac{1}{2}}+\mathbf{\Lambda} \\
&= \mathbf{\Lambda} + \mathbf{S}^{\frac{1}{2}}\mathbf{B}\mathbf{S}^{\frac{1}{2}}
\end{align}
Left and right multiply both side by $\mathbf{S}^{-\frac{1}{2}}$, 
\begin{align}
\mathbf{D} \mathbf{S} \mathbf{D} + \mathbf{D}\mathbf{\Lambda}^{\frac{1}{2}}
+ \mathbf{\Lambda}^{\frac{1}{2}}\mathbf{D} 
= \mathbf{B}
\end{align}
Since $\bar{\mathbf{\Lambda}}$, $\bar{\mathbf{B}}$ and $\bar{\mathbf{D}}$ are the expansion matrices with partition vector $\mathbf{s}$ of $\mathbf{\Lambda}$, $\mathbf{B}$ and $\mathbf{D}$, respectively, we have equivalently
\begin{align}
\bar{\mathbf{D}} \bar{\mathbf{D}} + \bar{\mathbf{D}}\bar{\mathbf{\Lambda}}^{\frac{1}{2}}
+ \bar{\mathbf{\Lambda}}^{\frac{1}{2}}\bar{\mathbf{D}}
&= \bar{\mathbf{B}} \\
\bar{\mathbf{\Lambda}} + \bar{\mathbf{D}} \bar{\mathbf{D}} + \bar{\mathbf{D}}\bar{\mathbf{\Lambda}}^{\frac{1}{2}}
+ \bar{\mathbf{\Lambda}}^{\frac{1}{2}}\bar{\mathbf{D}} 
&= \bar{\mathbf{\Lambda}}  + \bar{\mathbf{B}}  \\
(\bar{\mathbf{\Lambda}}^{\frac{1}{2}}+\bar{\mathbf{D}})^2
&= \bar{\mathbf{\Lambda}} + \bar{\mathbf{B}}  \\
\bar{\mathbf{\Lambda}}^{\frac{1}{2}}+\bar{\mathbf{D}}
&= (\bar{\mathbf{\Lambda}} + \bar{\mathbf{B}})^{\frac{1}{2}}
\end{align}
\end{proof}

\begin{theorem}
For invertible matrix $\bar{\mathbf{\Lambda}}+\bar{\mathbf{B}}$, where $\bar{\mathbf{\Lambda}}$ is the diagonal expansion matrix of $\mathbf{\Lambda}$ and $\bar{\mathbf{B}}$ is the full expansion matrix of $\mathbf{B}$, both of which have the same partition vector $\mathbf{s}$,
\begin{align}
(\bar{\mathbf{\Lambda}}+\bar{\mathbf{B}})^{-\frac{1}{2}} = \bar{\mathbf{\Lambda}}^{-\frac{1}{2}} + \bar{\mathbf{D}}
\end{align}
 where $\bar{\mathbf{D}}$ is the full expansion matrix with partition vector $\mathbf{s}$ of 
\begin{align}
\mathbf{D} = \mathbf{S}^{-\frac{1}{2}}\left[(\mathbf{\Lambda} + \mathbf{S}^{\frac{1}{2}}\mathbf{B}\mathbf{S}^{\frac{1}{2}} )^{-\frac{1}{2}} - \mathbf{\Lambda}^{-\frac{1}{2}}\right]\mathbf{S}^{-\frac{1}{2}} 
\label{D_theorem2}
\end{align}
where $\mathbf{S}= \text{\normalfont diag}(\mathbf{s})$.
\end{theorem}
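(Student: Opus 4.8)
The plan is to obtain the inverse square root by \emph{composing} the two results that precede this statement: the square root formula of Lemma 1 and the inverse formula of Theorem 1. The key observation is that, by Lemma 1, the principal square root of $\bar{\mathbf{\Lambda}}+\bar{\mathbf{B}}$ is \emph{itself} a block mean approximation matrix, so Theorem 1 can be invoked to invert it directly rather than repeating the Woodbury-style computation from scratch.

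First I would apply Lemma 1 to write $(\bar{\mathbf{\Lambda}}+\bar{\mathbf{B}})^{\frac{1}{2}} = \bar{\mathbf{\Lambda}}^{\frac{1}{2}} + \bar{\mathbf{D}}'$, where $\bar{\mathbf{D}}'$ is the full expansion of $\mathbf{D}' = \mathbf{S}^{-\frac{1}{2}}[(\mathbf{\Lambda} + \mathbf{S}^{\frac{1}{2}}\mathbf{B}\mathbf{S}^{\frac{1}{2}})^{\frac{1}{2}} - \mathbf{\Lambda}^{\frac{1}{2}}]\mathbf{S}^{-\frac{1}{2}}$. Since $\bar{\mathbf{\Lambda}}^{\frac{1}{2}}$ is precisely the diagonal expansion of the diagonal matrix $\mathbf{\Lambda}^{\frac{1}{2}}$, this square root is a BMA matrix whose diagonal-expansion part corresponds to $\mathbf{\Lambda}' = \mathbf{\Lambda}^{\frac{1}{2}}$ and whose full-expansion part corresponds to $\mathbf{B}' = \mathbf{D}'$. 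Being the principal square root of a positive definite matrix, it is positive definite, so the hypotheses of Theorem 1 are met.

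Then I would invoke Theorem 1 to get $(\bar{\mathbf{\Lambda}}^{\frac{1}{2}}+\bar{\mathbf{D}}')^{-1} = \bar{\mathbf{\Lambda}}^{-\frac{1}{2}} + \bar{\mathbf{D}}''$, where $\mathbf{D}'' = (\mathbf{\Lambda}'\mathbf{S} + \mathbf{S}\mathbf{B}'\mathbf{S})^{-1} - (\mathbf{\Lambda}'\mathbf{S})^{-1}$, noting that the leading term $\bar{\mathbf{\Lambda}}^{-\frac{1}{2}}$ already matches the claim. The crux is the simplification of the inner matrix $\mathbf{\Lambda}'\mathbf{S} + \mathbf{S}\mathbf{B}'\mathbf{S}$. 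Substituting $\mathbf{B}' = \mathbf{D}'$ and using that $\mathbf{S}$ and $\mathbf{\Lambda}$ are diagonal and hence commute (so that $\mathbf{\Lambda}^{\frac{1}{2}}\mathbf{S} = \mathbf{S}^{\frac{1}{2}}\mathbf{\Lambda}^{\frac{1}{2}}\mathbf{S}^{\frac{1}{2}}$), the two $\mathbf{\Lambda}^{\frac{1}{2}}$ contributions cancel and the expression factors as $\mathbf{S}^{\frac{1}{2}}(\mathbf{\Lambda}+\mathbf{S}^{\frac{1}{2}}\mathbf{B}\mathbf{S}^{\frac{1}{2}})^{\frac{1}{2}}\mathbf{S}^{\frac{1}{2}}$. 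Inverting this, and likewise rewriting $(\mathbf{\Lambda}'\mathbf{S})^{-1} = \mathbf{S}^{-\frac{1}{2}}\mathbf{\Lambda}^{-\frac{1}{2}}\mathbf{S}^{-\frac{1}{2}}$, collapses $\mathbf{D}''$ into exactly the $\mathbf{D}$ of the statement. Since $(\bar{\mathbf{\Lambda}}+\bar{\mathbf{B}})^{-\frac{1}{2}}$ is the inverse of $(\bar{\mathbf{\Lambda}}+\bar{\mathbf{B}})^{\frac{1}{2}}$, this completes the argument.

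I expect the main obstacle to be the clean factorization in the middle step: one must be sure the $\mathbf{S}^{\pm\frac{1}{2}}$ factors can be pulled outside the square root, which is legitimate only because $\mathbf{S}$ is diagonal and commutes with $\mathbf{\Lambda}$, and crucially does \emph{not} require $\mathbf{S}$ to commute with $\mathbf{B}$. A secondary point worth confirming is that the full-expansion map intertwines these scalar-level matrix operations in the same manner exploited in the proofs of Theorem 1 and Lemma 1, so that the identity derived for $\mathbf{D}''$ lifts correctly to the expanded matrices.
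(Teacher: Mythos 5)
Your proposal is correct and takes essentially the same approach as the paper: both proofs obtain the result by composing Lemma 1 and Theorem 1, using the key fact that each of those results outputs another BMA matrix to which the other can be applied. The only difference is the order of composition — you compute $\left(\left(\bar{\mathbf{\Lambda}}+\bar{\mathbf{B}}\right)^{\frac{1}{2}}\right)^{-1}$ (Lemma 1 first, then Theorem 1), whereas the paper computes $\left(\left(\bar{\mathbf{\Lambda}}+\bar{\mathbf{B}}\right)^{-1}\right)^{\frac{1}{2}}$ by substituting the $\mathbf{D}$ of Theorem 1 and $\mathbf{\Lambda}^{-1}$ into Lemma 1; since inversion and the principal square root commute on positive definite matrices, the two routes are equivalent and the diagonal-term cancellation and $\mathbf{S}^{\pm\frac{1}{2}}$ factorization collapse identically in both.
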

\begin{proof}
The theorem can be proved by combining results of Theorem 1 and Lemma 1.
Substituting $\mathbf{B}$ in (\ref{D_lemma1}) with $\mathbf{D}$ in (\ref{D_theorem1}) and substituting $\mathbf{\Lambda}$ in (\ref{D_lemma1}) with $\mathbf{\Lambda}^{-1}$, we get $\mathbf{D}$
\begin{align}
= &\mathbf{S}^{-\frac{1}{2}}[(\mathbf{\Lambda} + \mathbf{S}^{\frac{1}{2}} ((\mathbf{\Lambda} \mathbf{S} + \mathbf{S}\mathbf{B}\mathbf{S} )^{-1}  \\
-&(\mathbf{\Lambda}\mathbf{S})^{-1})\mathbf{S}^{\frac{1}{2}} )^{\frac{1}{2}} - \mathbf{\Lambda}^{-\frac{1}{2}}]\mathbf{S}^{-\frac{1}{2}}  \\
=&\mathbf{S}^{-\frac{1}{2}}[\mathbf{S}^{\frac{1}{2}} (\mathbf{\Lambda} \mathbf{S} + \mathbf{S}\mathbf{B}\mathbf{S} )^{-1} \mathbf{S}^{\frac{1}{2}} )^{\frac{1}{2}}- \mathbf{\Lambda}^{-\frac{1}{2}}]\mathbf{S}^{-\frac{1}{2}} \\
=&\mathbf{S}^{-\frac{1}{2}}[\mathbf{S}^{\frac{1}{2}} \mathbf{S}^{-\frac{1}{2}}(\mathbf{\Lambda}  + \mathbf{S}^{\frac{1}{2}}\mathbf{B}\mathbf{S}^{\frac{1}{2}} )^{-1}\mathbf{S}^{-\frac{1}{2}} \mathbf{S}^{\frac{1}{2}} )^{\frac{1}{2}}- \mathbf{\Lambda}^{-\frac{1}{2}}]\mathbf{S}^{-\frac{1}{2}} \\
=&  \mathbf{S}^{-\frac{1}{2}}\left[(\mathbf{\Lambda} + \mathbf{S}^{\frac{1}{2}}\mathbf{B}\mathbf{S}^{\frac{1}{2}} )^{-\frac{1}{2}} - \mathbf{\Lambda}^{-\frac{1}{2}}\right]\mathbf{S}^{-\frac{1}{2}} 
\end{align}

\end{proof}

\clearpage
\bibliography{bma}	
\bibliographystyle{icml2018}

\end{document}